\theoremstyle{plain}
\newtheorem{theorem}{Theorem}[section]
\newtheorem{proposition}[theorem]{Proposition}
\theoremstyle{definition}
\theoremstyle{remark}
\newcommand{\indep}{\perp \!\!\! \perp}
\newcommand{\bigCI}{\mathrel{\text{\scalebox{1.07}{$\perp\mkern-10mu\perp$}}}}
\icmltitlerunning{From Noisy Prediction to True Label: Noisy Prediction Calibration via Generative Model}
\begin{document}

\twocolumn[
\icmltitle{From Noisy Prediction to True Label: \\ Noisy Prediction Calibration via Generative Model}



\icmlsetsymbol{equal}{\textbf{*}}

\begin{icmlauthorlist}
\icmlauthor{HeeSun Bae}{equal,sch}
\icmlauthor{Seungjae Shin}{equal,sch}
\icmlauthor{Byeonghu Na}{sch}
\icmlauthor{JoonHo Jang}{sch}
\icmlauthor{Kyungwoo Song}{sch2}
\icmlauthor{Il-Chul Moon}{sch,comp}
\end{icmlauthorlist}

\icmlaffiliation{comp}{Summary.AI, Daejeon, Republic of Korea}
\icmlaffiliation{sch}{Industrial and Systems Engineering, Korea Advanced Institute of Science and Technology (KAIST), Daejeon, Republic of Korea}
\icmlaffiliation{sch2}{Department of AI, University of Seoul, Seoul, Republic of Korea}

\icmlcorrespondingauthor{Il-Chul Moon}{icmoon@kaist.ac.kr}

\icmlkeywords{Noisy label, Deep generative model, Machine Learning, ICML}

\vskip 0.3in
]



\printAffiliationsAndNotice{\icmlEqualContribution} 

\begin{abstract}
Noisy labels are inevitable yet problematic in machine learning society. It ruins the generalization of a classifier by making the classifier over-fitted to noisy labels. Existing methods on noisy label have focused on modifying the classifier during the training procedure. It has two potential problems. First, these methods are not applicable to a pre-trained classifier without further access to training. Second, it is not easy to train a classifier and regularize all negative effects from noisy labels, simultaneously. We suggest a new branch of method, \textit{Noisy Prediction Calibration} (NPC) in learning with noisy labels. Through the introduction and estimation of a new type of transition matrix via generative model, NPC corrects the noisy prediction from the pre-trained classifier to the true label as a post-processing scheme. We prove that NPC theoretically aligns with the transition matrix based methods. Yet, NPC empirically provides more accurate pathway to estimate true label, even without involvement in classifier learning. Also, NPC is applicable to any classifier trained with noisy label methods, if training instances and its predictions are available. Our method, NPC, boosts the classification performances of all baseline models on both synthetic and real-world datasets. The implemented code is available at https://github.com/BaeHeeSun/NPC.

\end{abstract}

\section{Introduction}
The success of deep neural networks heavily relies on large-scale datasets with annotations \cite{dualT}. Whereas the importance of large-size dataset is unanimous, creating such large-scale dataset is arduous and often prone to human errors in their label annotations \cite{cores}. For instance, the recent utilization of crowd-sourcing \cite{crowd} or search engines \cite{clothing1m} in building the datasets potentially results in the problem of \textit{noisy label} \cite{coteaching, pencil, rel, elr, proselflc}. The model training with noisy label could be detrimental given the fitness of over-parameterized neural network to the training dataset, because such networks are even ready to fit the mislabeled training instances, a.k.a \textit{memorization} \cite{arpit,zhang,rel} problem.

Several studies have suggested resolutions on \textit{memorization} from noisy labels. First, \textit{noise-cleansing} approaches \cite{decoupling,masking,joint,coteaching,coteachingplus,sigua,jocor,lrt,meta,proselflc,fine} focus on segregating the clean data pairs from the corrupted dataset, based on the outputs of noisy classifier, i.e. loss, entropy, and feature alignment. Second, \textit{noise-robust} approaches utilize explicit regularizations \cite{elr,rel} or robust loss functions \cite{gce, sce, apl} to design a classifier robust to the noisy labels. Yet, their modeling assumptions and structures mostly originate from either heuristics or hard assumptions. Another line of researches suggest an explicit formulation on noise patterns, i.e. \textit{transition matrix} $T$ from a true label to a noisy one \cite{forward, dualT, tvr, pdn, csidn}. While methods with $T$ provide rigorous formulation on the label modifications, accurate estimation of $T$ heavily depends on the inference of true label $y$, which is assumed latent from observations \cite{causalNL}.

With the unsolved challenges above, classifiers trained by existing methods are still not robust to label noises, depending on the various noise characteristics \cite{survey_noisy}. It necessitates the modelling of reducing the gap between the prediction of trained classifier and the true latent label. Post-processing can be effective for this objective, in that it is easily applicable to any trained classifier without access to training. Motivated by this spirit, we introduce another branch of solutions on the problem of noisy labels. We propose the algorithm, coined Noisy Prediction Calibration (NPC), which estimates the explicit transition from a noisy prediction to a true latent class via utilizing a deep generative model. NPC operates as a post-processing module to a black-box classifier trained on noisy labels, while previous transition models were applied during the training procedures. Therefore, NPC can expand the scalability of transition models in terms of learning time by far, and NPC inherits wider usability because of its post-processing nature.

For theoretical aspect, we prove that NPC is interchangeable with the transition matrix, which generalizes the modeling framework of NPC. For empirical aspect, NPC significantly boosts the accuracy of classifiers trained with various type of noisy labels without any access to noise ratio. Moreover, NPC captures potentially noisy data instances from the learning with benchmark datasets, i.e. MNIST and Fashion-MNIST, which have been believed to be clean.

\section{Problem Definition}
\label{sec: problem definition}
\subsection{Problem Setup}
Assuming a classification task of $c$ classes, let $\mathcal{X} \subset \mathbb{R}^d$ and $\mathcal{Y} = \left\{1,2,...,c\right\}$ be a $d$-dimensional input space and a label set, respectively. Given the input and the label spaces, the i.i.d. samples from the joint probability distribution $P$ over $\mathcal{X}\times \mathcal{Y}$, $\mathcal{D}=\left\{ (x_i,y_i) \right\}_{i=1}^n$, becomes a classification dataset. Unlike traditional supervised learning, our assumption on the noisy label dictates that only observables are $\tilde{\mathcal{D}}=\left\{ (x_i,\tilde{y_i}) \right\}_{i=1}^n$, which are samples from $\tilde{P}$, potentially different from $P$.

The training objective of a classifier $f$ is to minimize the true risk, $R_{L}(f):= \mathbb{E}_{P}\left[L\left(f\left(x \right),y \right) \right]$, yet the only accessible risk function is the noisy empirical risk of $\tilde{R}_{L}^{emp}(f):=\frac{1}{n}\sum_{i=1}^{n}L\left ( f\left ( x_i \right ),\tilde{y}_i \right )$. Hence, when learning with noisy labels, the objective becomes finding a function that minimizes $R_{L}(f)$ via the learning procedure with $\tilde{R}_{L}^{emp}(f)$.

\subsection{Previous Research on Noisy Label Classification} Neural networks trained with gradient descent can easily fit even random labels \cite{zhang}. To tackle this issue, various works have been introduced for learning with noisy labels. 
One directional approaches include the extraction of reliable clean samples \cite{coteaching, coteachingplus, jocor}, label modification \cite{joint, pencil, lrt, dividemix, meta, proselflc}, introducing noise-robust losses \cite{gce, sce}, and additive regularization \cite{elr,rel}. \footnote{A detailed description of each method is in Appendix \ref{appen: prev noisy}.}
These works usually hinge upon heuristics or assumptions, such as the phenomenon of learning \textit{simple pattern} at the early stage of learning \cite{arpit}. These experimentally justified heuristics require hand-picked hyper-parameters, such as noise ratio, early stopping time, and noisy indication threshold, which are critical for their performance improvement.

Other type of approaches estimate the true class probability by formulating a transition matrix, $T$, as follows:
\begin{equation}
\label{Eq:transition matrix}
T_{kj}(x)=p(\tilde{y}=j|y=k,x) \;\, \text{for} \; \text{all} \; j,k=1,...,c
\end{equation}
Transition matrix, $T$, provides a probabilistic formulation on label transition from true class $y$ to noisy label $\tilde{y}$. Since $p(\tilde{y}|x)=\sum_{k=1}^{c}p(\tilde{y}|y=k,x)p(y=k|x)$, $T$ provides a pathway to the true loss from true label pairs with observable $\tilde{y}$ and $x$ as follows:
\begin{equation}
\label{Eq:revised loss}
E_{\tilde{P}}[L(T(f(x)),\tilde{y})]=R_L(f)
\end{equation}
The benefit of transition matrix methods is the explicit formulation of label distribution modifications. However, estimating $T$ with a latent variable $y$ is not an easy problem.

\subsection{Previous Studies on Transition Matrix Estimation} 
\label{sec:prev trans}
$p(\tilde{y}=j|y=i,x)$ has a large support space if the input space has a large dimension. As an initial solution, \cite{forward,dualT, tvr} suggested the transition matrix at the aggregated level by assuming $p(\tilde{y}=j|y=i,x)=p(\tilde{y}=j|y=i)$ with the instance independence of transition probability. Yet, the estimation gap exists with this assumption because $x$ influences on the transition to a noise label of $\tilde{y}$, i.e. an image of \textit{3} resembling to \textit{5} in MNIST and its potential mislabeling \cite{aaai-idn}. To relax the assumption of instance independence, there are various models on estimating an instance-dependent $T$ \cite{aaai-idn,pdn,csidn, causalNL}, but they still have limitations since they rely on either assumption of part-dependent label noise \cite{pdn} or access to additional information on confidence score \cite{csidn}.

Emphasizing the importance on estimating latent $y$, a recent study \cite{causalNL} proposes an estimation of $T$ by maximizing the likelihood of observable variables, $p(x,\tilde{y})$, through a generative model with the latent $y$. They model a noisy data generative process as Figure \ref{fig:CausalNL}. Their model, CausalNL introduces an auxiliary latent variable of $z$ that is another source of information to generate $x$, other than $y$ so that $z$ and $y$ jointly generate $x$. Eventually, this model learns the generation process of $x$ by Variational Autoencoder (VAE) \cite{vae}, and this serves the maximization of $p(x,\tilde{y})$, which leaves $p(z,x,y,\tilde{y})$ as a side-product that becomes an ingredient to formulate $T$. This generation is inferred via maximizing Evidence Lower Bound (ELBO) of Eq. \ref{eq:ELBOO}.
\begin{equation}
\begin{aligned}
&\text{ELBO}(x,\Tilde{y})= \mathbb{E}_{(z,y)\sim q_{\phi}(z,y|x)} [ \log p_{\theta}(x|y,z) ]\\&+\mathbb{E}_{y\sim q_{\phi}(y|x)}[ \log p_{\theta}(\Tilde{y}|y,x)]-\text{KL}(q_\phi(y|x)||p(y))\\
&-\mathbb{E}_{y\sim q_{\phi}(y|x)}[\text{KL}(q_\phi(z|y,x)||p(z)) ]\\
\end{aligned}
\label{eq:ELBOO}
\end{equation}
This estimation framework may have two shortcomings. First, the generation of $x$, whose data resolution can be high, would not be an appropriate objective for a noisy classification task. This model would infer the generation process with sub-optimal reconstruction, a well-known problem of VAE \cite{vaeblurry}. Therefore, inference on $T$ with VAE would lead to sub-optimal, as well. Second, CausalNL assumes that there is an additional cause of $x$ from $z$, so $z$ and $y$ jointly generate $x$. Given the observed $x$, $z$ and $y$ become dependent by V-structure as in Figure \ref{fig:CausalNL}. For classification, the correlation between $z$ and $y$ needs to be disentangled from $z$ because $y$ needs to model only the pure label information. This disentanglement can be a burden particularly given that the generation of $x$ is not the main goal of classification with noisy label.

Deep generative models for classification with noisy labels have not yet been actively studied. However, the importance of estimating the true class $y$ as latent motivates us to utilize a deep generative model. Having said that, there are differences between CausalNL and our model, NPC, in terms of generative model design and modeling purpose. From the perspective of generative model design, our method differs from CausalNL because we remove the unnecessary generation of $x$ without any auxiliary latent variables. For modelling purposes, CausalNL improves the classifier learning through identifiable estimation of $T$. On the contrary, we introduce and estimate a new type of transition matrix, $H$, which is mainly utilized to post-process the outputs of classifiers trained on noisy datasets. We discuss the advantageous properties of post-processing for noisy classifiers in Section \ref{sec:3.1}. Also, we claim that our introduced transition, $H$, is interchangeable with $T$ in Section \ref{sec: comp with trans}.
\begin{figure}[h]
    \centering
    \includegraphics[width=0.95\columnwidth]{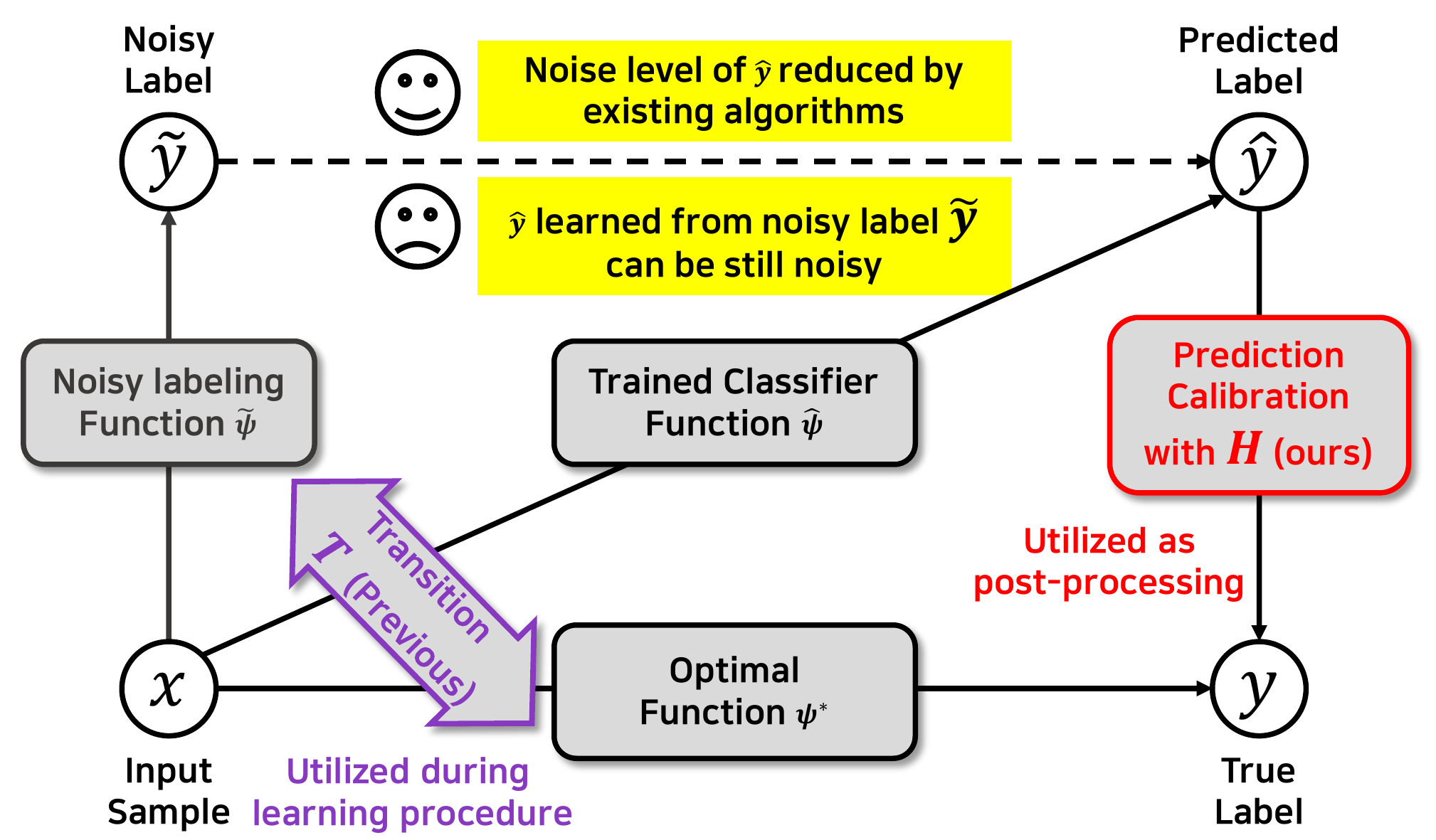}
    \caption{\textbf{Overview.} Existing methods shift the function $\psi$ to estimate the mapping from $x$ to $y$ during the learning procedure. On the contrary, our method, NPC, formulates a prediction calibration procedure which reduces the gap between noisy prediction $\hat{y}$ and true label $y$ under the post-processing framework. NPC successfully boosts the pre-trained classifier performances by leveraging the classifier predictions as noise-reduced inputs.}
    \vskip-0.15in
    \label{fig:method explanation}
\end{figure}
\section{Method}
\label{sec: Method}
This section presents our model, Noisy Prediction Calibration (NPC), by going through its theoretic formulation, probabilistic model structure, and its inference procedure. Additionally, we discuss the relation between NPC and the original transition matrix $T$.

\begin{figure*}
    \centering
    \begin{minipage}{0.162\textwidth}
        \begin{subfigure}[t]{.9\linewidth}
            \includegraphics[width=\textwidth]{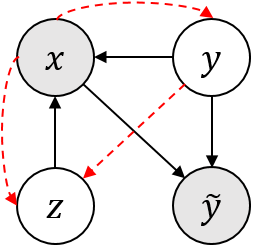}
            \caption{CausalNL}
            \label{fig:CausalNL}
        \end{subfigure}
        \begin{subfigure}[t]{.9\linewidth}
            \includegraphics[width=\textwidth]{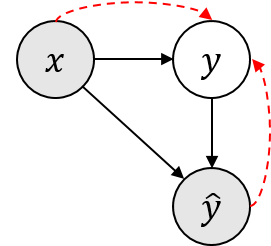}
            \caption{NPC (Ours)}
            \label{fig:mine_network}
        \end{subfigure}
    \end{minipage}
    \begin{minipage}{0.61\textwidth}
        \begin{subfigure}[t]{\linewidth}
            \includegraphics[width=\textwidth]{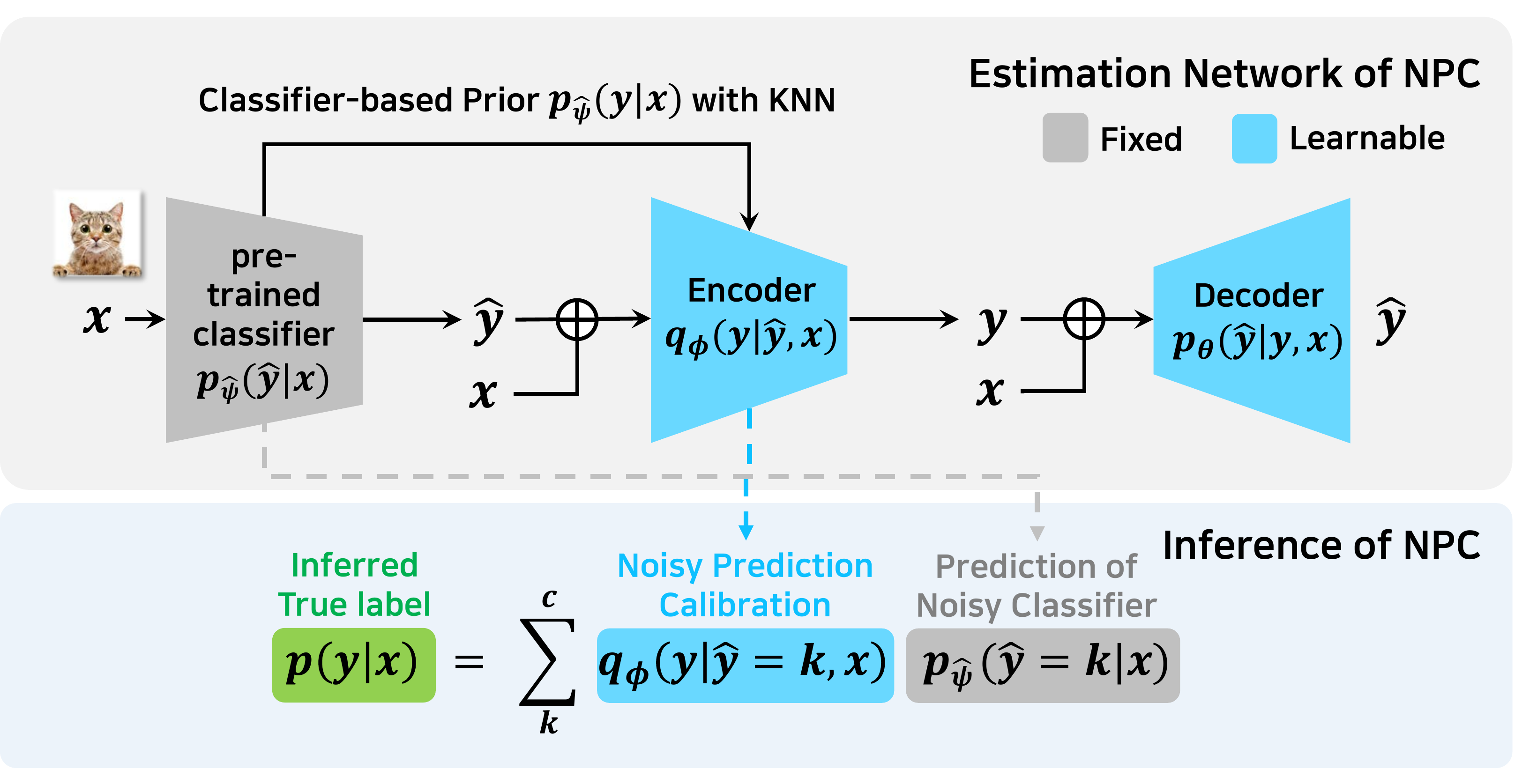}
            \caption{The Estimation Structure (upper) and Inference (under) of NPC}
            \label{fig:mine_NN}
        \end{subfigure}
    \end{minipage}
    \caption{\textbf{Structure.} Bayesian network of (a) CausalNL and (b) NPC. Arrows with solid lines and dashed lines denote generative process and inference process, respectively; illustrations of (c) Neural Network structure (upper) and inference (under) of NPC, respectively.}
\label{fig:method main figure}
\end{figure*}

\subsection{Motivational Comparison with Transition Matrix}
\label{sec:3.1}
Before we present our approach, we first explain our abstract view of noisy label classification in this section. Let $\psi$ be a set of model parameters of a classifier. Then, there are three cases of $\psi$: 1) $\tilde{\psi}$ which composes the function that explains the noisy labelled dataset perfectly; 2) $\hat{\psi}$ which is equivalent to classifiers trained with algorithms for learning with noisy labels; and 3) $\psi^{*}$ is from the optimal classifier that best explains the true joint distribution of $x$ and $y$.

Traditional methodologies for dealing with noisy labels have focused on finding $\psi^{*}$ with observation of ($x$,$\tilde{y}$). Without any access to the true joint distribution between $x$ and $y$, it results in $\hat{\psi}$. Corresponding to the definition of $\hat{\psi}$, a classifier $f$ maps $x$ to $\hat{y}$, which is a classifier prediction at the current training. Figure \ref{fig:method explanation} shows these mappings from $x$ to $y$, $\tilde{y}$, and $\hat{y}$.

Previously, there was no explicit modelling on $\hat{y}$ because $\hat{y}$ was considered as a transient state converging to $y$. This transience is modelled with $T$ in the previous research instead of articulating $\hat{y}$. $T$ is merged into the learning process as Eq. \ref{Eq:revised loss}, so $T$ becomes a force to converge on $y$ with only $\tilde{R}^{emp}_{L}$, which we represents as a purple texts in Figure \ref{fig:method explanation}.

We hypothesize that there could be also a chance of \textit{calibration} at the label space of $y$, instead of the parameter space of $\psi$. We call this procedure as Noisy Prediction Calibration (NPC) because $\hat{\psi}$ is assumed to be fixed and to produce $\hat{y}$. The calibration will transform $\hat{y}\rightarrow y$ by $H$ as a prediction adjustment, where our calibration matrix $H$ is formulated as Eq. \ref{eq:transition}. 
\begin{equation}
\begin{aligned}
\label{eq:transition}
p(y|x) &= \sum_{\Hat{y}}p(y|\Hat{y},x)p(\Hat{y}|x) \\
H_{kj}(x) &= p(y=j|\Hat{y}=k,x) \;\, for \; j,k=1,...,c
\end{aligned}
\end{equation}
The predicted label $\hat{y}$ from the classifier trained by algorithms for managing noisy labels has the potential to reduce the noise-level compared to the original noisy label $\tilde{y}$. Then, by explicitly mentioning $\hat{y}$ from the post-processing stage, we find a novel opportunity to adjust $\hat{y}$. If there is a trained classifier $f$ \footnote{Such black-box classifiers include deep neural networks with too many parameters \cite{gpt3,vit} for training given a user's computing environment.}, the previous methods with $T$ are not applicable to calibrate noisy labels to manage the relation from $\hat{y}$ to $y$. With post-processing mechanism, however, an adjustment $\hat{y}\rightarrow y$ without altering $\hat{\psi}$ can be applied to the pre-trained classifier as the red text in Figure \ref{fig:method explanation}.

The most critical difference between 1) a classifier training with noisy labels with $T$ and 2) a noisy prediction calibration with $H$ in post-processing is whether influencing on $\hat{\psi}$ or calibrating $\hat{y}$. Surely, the calibration quality of $H$ depends upon the quality of $\hat{y}$ and the structure of $\hat{H}$, the estimation of $H$. Section \ref{sec:model structure} describes our probabilistic estimation on $\hat{H}$ with minimal introduction of latent variables. Then, we show whether $T$ and $H$ can be analytically aligned with different formulation. Section \ref{sec: comp with trans} provides a theoretic claim that $T$ and $H$ are potentially interchangeable with some implementable weighting variables.

\subsection{NPC : Noisy Prediction Calibration Algorithm}
\label{sec:model structure}
As described in Eq. \ref{eq:transition}, we need to estimate $p(y|\hat{y},x)$ and $p(\Hat{y}|x)$. Since $p(\Hat{y}|x)$ can be obtained by training a classifier, we focus on the estimation of $p(y|\hat{y},x)$.

\subsubsection{Probabilistic Model Structure} 
We model a generative process of data instances with label noise as Figure \ref{fig:mine_network}. This generative process expresses the probabilistic relation of $y$ and $\tilde{y}$ given input $x$.
\begin{enumerate}
    \item Choose a latent vector $y$ $\sim$ \text{Dir}($\alpha_x$)
    \item Choose a noisy label $\tilde{y} \sim$ \text{Multi}($\pi_{x,y}$)
\end{enumerate}
$\alpha_{x}$ is the instance-dependent parameter of the prior probability distribution given a sample $x$, $\alpha_{x} \in \mathbb{R}_{+}^{c}$. Dir($\alpha_{x}$) is a Dirichlet distribution parameterized by $\alpha_{x}$, which is a conjugate prior of the corresponding multinomial distribution. $\pi_{x,y}^{k}$ is the probability for selecting a class $k=1,...,c$ for the noisy label, $\tilde{y}$ \footnote{$\pi_{x,y} \in \mathbb{R}_{+}^{c}$, $\sum_{k=1}^c \pi_{x,y}^k=1$}. Multi($\pi_{x,y}$) is the multinomial distribution with $\pi_{x,y}$.

This generative process describes the following scenario of noisy labelling. Given an input $x$, there is a prior distribution of its true label $y$ with a Dirichlet distribution. Then, the input content and its prior information on true label dictates a noisy label $\tilde{y}$. 

According to the generative process above, the joint probability $p(y,\Hat{y},x)$ can be factorized as follows:
\begin{equation}
p(y,\Hat{y},x)\propto p(y|x)p(\hat{y}|y,x)
\end{equation}
Here, as $p(y|x)$ is unknown from our training state, we approximate it with our pre-trained classifier parameterized by $\hat{\psi}$. The probabilities are defined as:
\begin{equation}
\begin{aligned}
    p_{\hat{\psi}}(y|x) = \text{Dir}(\alpha_x),\;p(\tilde{y}|y,x) = \text{Multi}(\pi_{x,y})
\label{eq:proba_def}
\end{aligned}
\end{equation}
\subsubsection{Parameter Inference} Our main estimation target is the posterior probability $p(y|\hat{y},x)$, which is intractable. Accordingly, we follow the variational inference \cite{vae} framework to minimize the KL divergence between the inference distribution and the target distribution. We introduce a variational distribution, $q(y|\hat{y},x)$, and we parametrize the variational and the remaining model distributions with $\phi$ and $\theta$, respectively. These formulations result in the KL divergence as Eq. \ref{eq:kl divergence}.

\begin{equation}
\begin{aligned}
\text{KL}&(q_{\phi}(y|\Hat{y},x)||p(y|\Hat{y},x)) \\
&= \int q_{\phi}(y|\Hat{y},x) \log \frac{q_{\phi}(y|\Hat{y},x)}{p(y|\Hat{y},x)}dy \\
&= \int q_{\phi}(y|\Hat{y},x) \log \frac{q_{\phi}(y|\Hat{y},x)p(\Hat{y}|x)}{p(\Hat{y}|y,x)p(y|x)}dy \\
&= \;\log p(\Hat{y}|x)-E_{y\sim q_{\phi}(y|\Hat{y},x)}\left[ \log p_{\theta}(\Hat{y}|y,x) \right] \qquad \\
&+\text{KL}(q_{\phi}(y|\Hat{y},x)||p(y|x)) = \log p(\Hat{y}|x)-\text{ELBO} \\
\end{aligned}
\label{eq:kl divergence}
\end{equation}

With a latent variable $y$ following the Dirichlet distribution, the direct reparameterization is difficult unlike a case with normal distribution prior \cite{vae}. Therefore, we utilize the reparametrization trick from \cite{dirichletvae}, which decomposes Dirichlet distribution into Gamma distributions with their inverse CDF\footnote{we describe the reparameterization details in Appendix \ref{appen: reparameterization}.}. Finally, Eq. \ref{eq:elbo} presents ELBO in Eq. \ref{eq:kl divergence} as a objective function.
\begin{equation}
\begin{aligned}
\text{ELBO} = &\sum_{k=1}^c\Hat{y}^i\log\Hat{y^*}^k+(1-\Hat{y}^k)\log(1-\Hat{y^*}^k) \\
& -\sum_{k=1}^c\log\Gamma(\alpha_{x}^k)+\sum_{k=1}^c\log\Gamma(\Hat{\alpha}_{x,\Bar{y}}^k)\\
& - \sum_{k=1}^c(\Hat{\alpha}_{x,\Bar{y}}^k-\alpha_{x}^k)\psi(\Hat{\alpha}_{x,\Bar{y}}^k)\\
\end{aligned}
\label{eq:elbo}
\end{equation}
Here, $\Hat{y^*}$ is the reconstruction output; $\Gamma$ and $\psi$ are the gamma and digamma function, respectively.

\subsubsection{Implementation}
\label{section:implementation}
Figure \ref{fig:mine_NN} shows the illustration of a neural network structure for NPC. Since we are interested in $p(y|\hat{y},x)$, we adopted the structure of Variational Autoencoder (VAE) to generate the parameter of variational posterior distribution.

We design a prior distribution of the latent variable $y$ to be dependent on $x$, which is enabled by utilizing the pre-trained classifier $\hat{\psi}$. We apply K-Nearest Neighbor (KNN) algorithm \cite{knn1, knn2} to samples with high confidence in the classifier output, $p_{\hat{\psi}}(\hat{y}|x)$. The most selected label from k neighbours is referred as $\Bar{y}$, and we differentiate the parameter values of the prior Dirichlet distribution of $y$ by $\Bar{y}$ as Eq. \ref{eq: prior}.
\begin{equation}
\begin{aligned}
\alpha_x^k& = \left\{\begin{matrix}
\delta & k\neq\Bar{y}\\ 
\delta+\rho & k=\Bar{y}
\end{matrix}\right. \; for\; k=1,...,c
\end{aligned}
\label{eq: prior}
\end{equation}
Here, $\delta$ and $\rho$ are hyper-parameters to setup $\alpha$ of the Dirichlet distribution. Throughout the paper, we fixed $\delta=1$.

While setting the prior parameters as above, we obtain the posterior distribution from the encoder. We select the soft plus function as the activation function of the encoder to make the posterior distribution parameter non-negative. The resulting posterior distribution provides a parameter sample of the posterior multinomial distribution $H$. We set the parameter of $H$ to be the mode of the posterior Dirichlet distribution, which becomes $H=p(y|\Hat{y},x)=\frac{\alpha_{x,\Hat{y}}^y-1}{\sum_{y=1}^c\alpha_{x,\Hat{y}}^y-c}$.

With the definition of $H$, we calibrate the noisy prediction described as in Figure \ref{fig:method explanation}, indicating $\hat{y}\rightarrow y$. This is an transition from the prediction of noisy classifier $(p(\hat{y}|x))$ to the true probability of label distribution $(p(y|x))$, which we will calculate as Eq. \ref{eq: prior_vae}.
\begin{equation}
\begin{aligned}
p(y|x)&=\sum_{k}p(y|\hat{y}=k,x)p(\hat{y}=k|x)\\&\approx\sum_{k}q_{\phi}(y|\hat{y}=k,x)p(\hat{y}=k|x)
\label{eq: prior_vae}
\end{aligned}
\end{equation}
\subsection{Alignment of $T$ and $H$}
\label{sec: comp with trans}
When we utilize NPC as a post-processing algorithm, the classifier $p_{\hat{\psi}}(\hat{y}|x)$ becomes a fixed model in Figure \ref{fig:mine_NN}. Therefore, training $H$ does not affect $\psi$, the classifier parameters, unlike training $T$. 
Nevertheless, we can relate our modeling with $T$ as stated in Proposition \ref{pro:transition relation} (full proof in Appendix \ref{appen: proposition proof}). This alignment shows that our methodology provides a same pathway to correct the noisy classifier as in the previous studies on $T$.
\begin{proposition}
Assume that $\Hat{y}$ and $y$ are conditionally independent given $\tilde{y}$, and $p(\Hat{y}=k|x) \neq 0$ for all $k=1,..,c$. Then, $H_{kj}(x) = \frac{p(y=j|x)}{p(\widehat{y}=k|x)}\sum_{i}p(\widehat{y}=k|\widetilde{y}=i,x)T_{ij}(x)$ for all $j,k=1,...,c$.
\label{pro:transition relation}
\end{proposition}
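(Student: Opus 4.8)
The plan is to compute $H_{kj}(x)=p(y=j\mid\hat y=k,x)$ directly from its definition in Eq.~\ref{eq:transition} using three moves with $x$ held fixed throughout: a single application of Bayes' rule, a marginalization over the latent noisy label $\tilde y$, and one invocation of the conditional-independence hypothesis to collapse a factor into an entry of $T$. First I would write
\begin{equation*}
H_{kj}(x)=p(y=j\mid\hat y=k,x)=\frac{p(\hat y=k\mid y=j,x)\,p(y=j\mid x)}{p(\hat y=k\mid x)},
\end{equation*}
which is legitimate precisely because the hypothesis $p(\hat y=k\mid x)\neq 0$ keeps the denominator nonzero. This already exposes the prefactor $p(y=j\mid x)/p(\hat y=k\mid x)$ of the claim, so the only remaining task is to show that the conditional likelihood $p(\hat y=k\mid y=j,x)$ equals the sum $\sum_i p(\hat y=k\mid\tilde y=i,x)\,T(\cdot)$.

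For that factor I would introduce $\tilde y$ by the law of total probability and then factor the joint,
\begin{equation*}
p(\hat y=k\mid y=j,x)=\sum_i p(\hat y=k\mid\tilde y=i,y=j,x)\,p(\tilde y=i\mid y=j,x).
\end{equation*}
The assumption $\hat y\indep y\mid\tilde y$ (read with $x$ in the conditioning set) removes the dependence on $y$ in the first factor, giving $p(\hat y=k\mid\tilde y=i,y=j,x)=p(\hat y=k\mid\tilde y=i,x)$, while the second factor is by definition an entry of the transition matrix $T$ from Eq.~\ref{Eq:transition matrix}. Substituting this expression back into the Bayes identity and pulling the prefactor outside the sum then yields the stated formula.

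The mathematical content here is light; the step needing the most care is index bookkeeping combined with pinning down the conditional-independence hypothesis. The marginalization naturally produces the factor $p(\tilde y=i\mid y=j,x)$, and one must read this through the convention $T_{kj}(x)=p(\tilde y=j\mid y=k,x)$ of Eq.~\ref{Eq:transition matrix} to land on the index ordering printed in the statement; this is the one place where a transposed or relabeled index could quietly enter, so I would verify it carefully against the appendix's convention. The second subtlety is that the hypothesis as stated conditions only on $\tilde y$, whereas the derivation requires $\hat y$ and $y$ to be independent given both $\tilde y$ and $x$. I would therefore make explicit that $x$ is carried as background throughout, consistent with the generative model of Figure~\ref{fig:mine_network} in which $\hat y$ depends on $y$ only through $\tilde y$ once $x$ is fixed, so that the Markovian collapse $p(\hat y\mid\tilde y,y,x)=p(\hat y\mid\tilde y,x)$ is justified. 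With those two conventions fixed, everything else is routine substitution.
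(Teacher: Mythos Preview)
Your proposal is correct and matches the paper's own proof in Appendix~\ref{appen: proposition proof}: both arguments marginalize over $\tilde y$, factor via the chain rule, and invoke $\hat y\indep y\mid\tilde y,x$ to drop $y$ from $p(\hat y\mid\tilde y,y,x)$; you simply apply Bayes' rule before marginalizing whereas the paper marginalizes first and then divides by $p(\hat y=k\mid x)$, which is a cosmetic reordering. Your caution about the index convention is well placed---the derivation naturally yields $p(\tilde y=i\mid y=j,x)$, and you should indeed check its identification with $T_{ij}(x)$ against Eq.~\ref{Eq:transition matrix} rather than taking the printed index order on faith.
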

The assumption of $\hat{y} \indep y|\tilde{y}$ is natural because $\hat{y}$ is conditionally independent to $y$ when $\hat{\psi}$ is trained only with $\tilde{y}$. Proposition \ref{pro:transition relation} shows that $H$ can be formulated by $T$ with weighting variables, which are observable or can be easily computed from our framework. It implies that we can also infer $T$ from the inference procedure of $H$, which suggests the possibility of relating the framework of NPC to the transition matrix based approaches.
\begin{figure}[h]
    \centering
    \begin{minipage}{0.325\columnwidth}
        \begin{subfigure}[t]{\linewidth}
            \includegraphics[width=\textwidth]{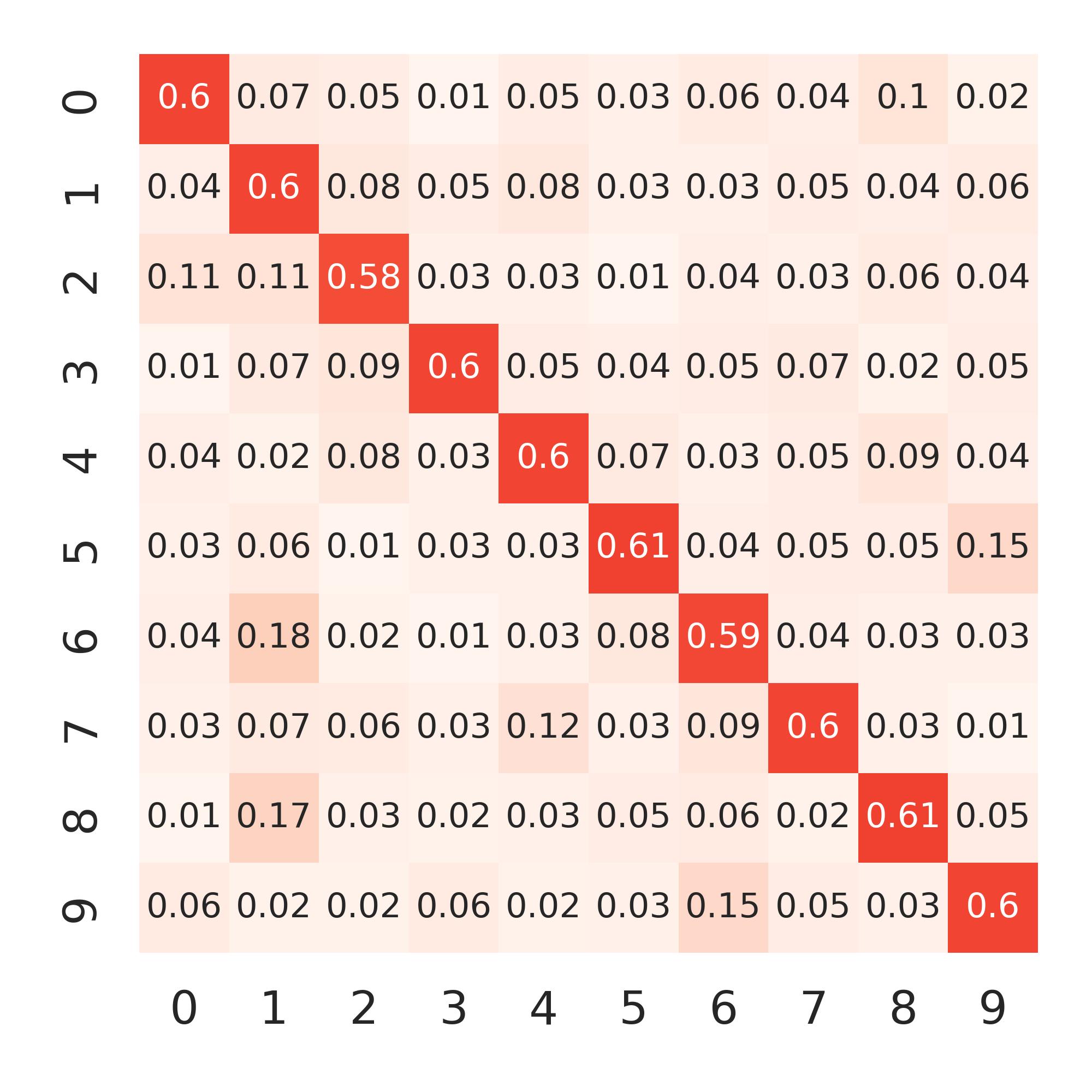}
            \caption{True Transition}
            \label{fig:true_transition}
        \end{subfigure}
    \end{minipage}
    \begin{minipage}{0.325\columnwidth}
        \begin{subfigure}[t]{\linewidth}
            \includegraphics[width=\textwidth]{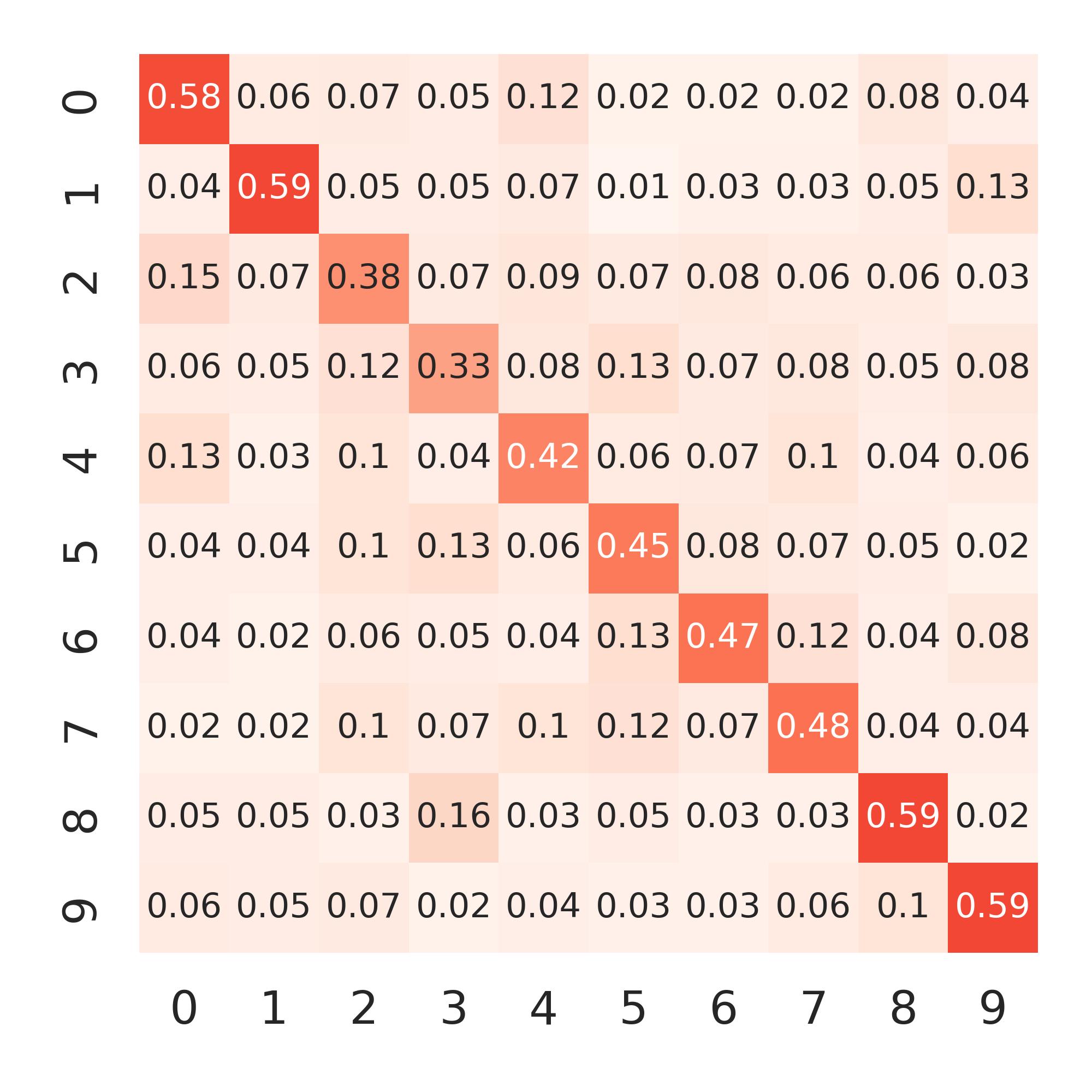}
            \caption{Forward (0.004)}
            \label{fig:forward_transition}
        \end{subfigure}
    \end{minipage}
    \begin{minipage}{0.325\columnwidth}
        \begin{subfigure}[t]{\linewidth}
            \includegraphics[width=\textwidth]{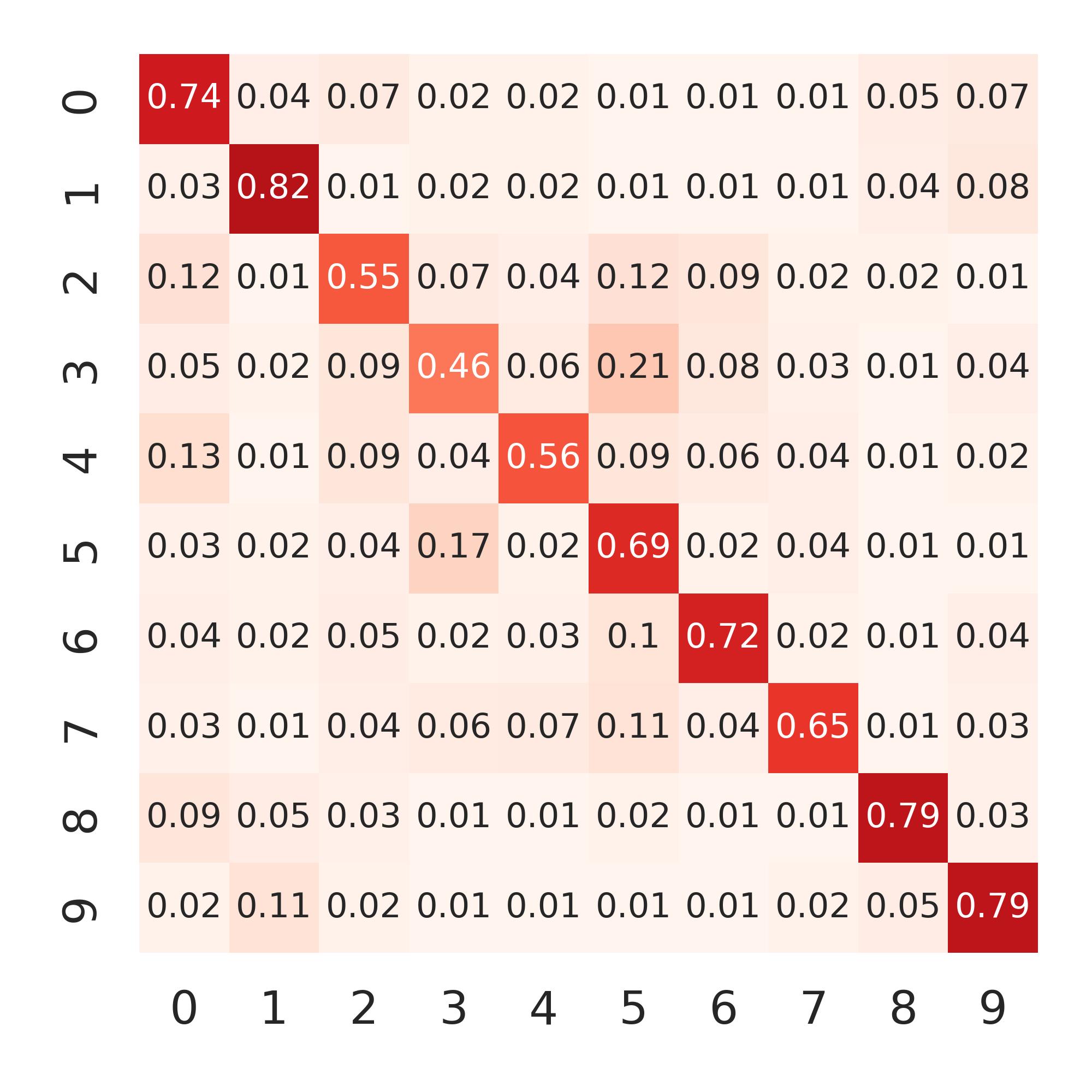}
            \caption{Dualt (0.004)}
            \label{fig:dualt_transition}
        \end{subfigure}
    \end{minipage}\par
    \begin{minipage}{0.325\columnwidth}
        \begin{subfigure}[t]{\linewidth}
            \includegraphics[width=\textwidth]{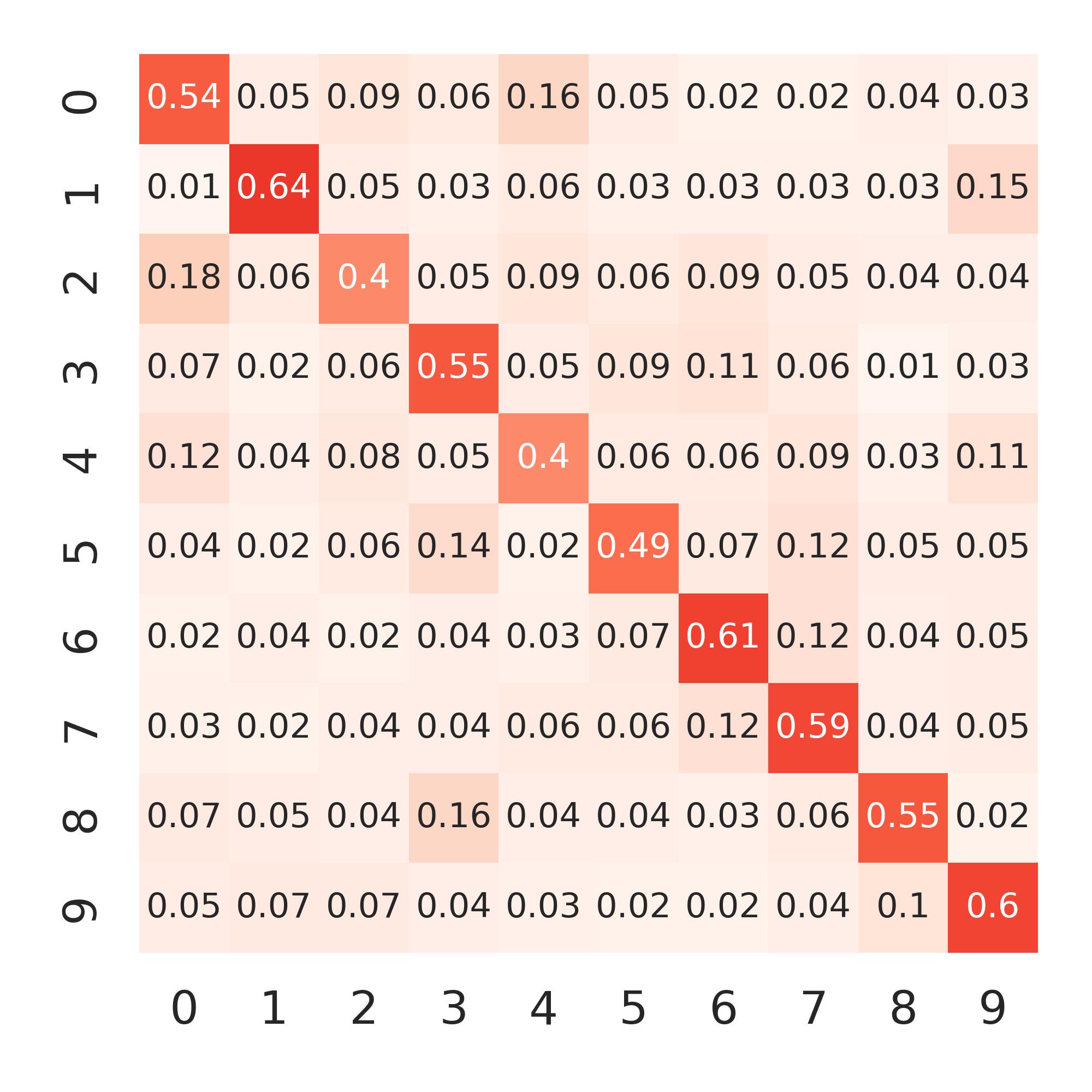}
            \caption{TVR (0.003)}
            \label{fig:tvr_transition}
        \end{subfigure}
    \end{minipage}
    \begin{minipage}{0.325\columnwidth}
        \begin{subfigure}[t]{\linewidth}
            \includegraphics[width=\textwidth]{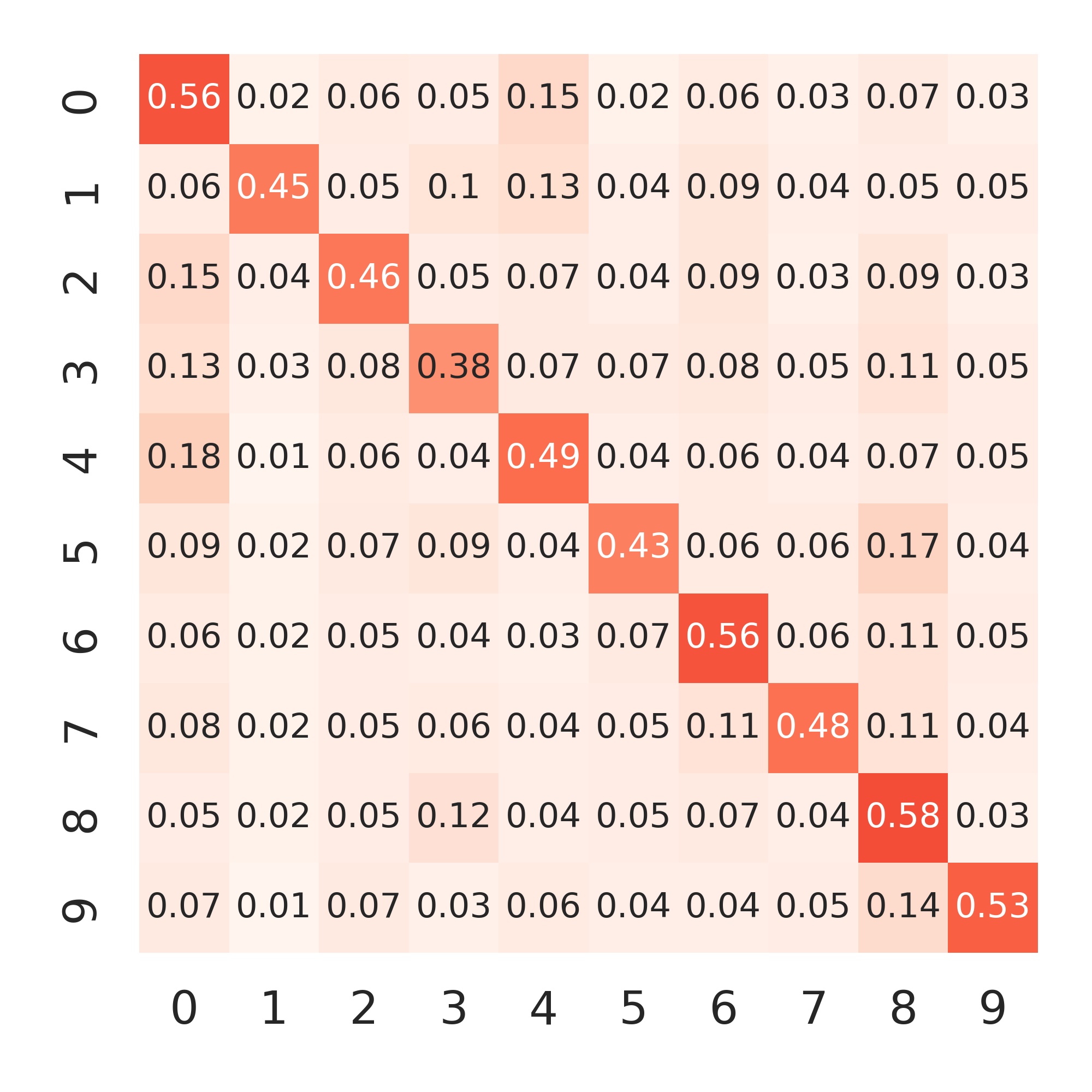}
            \caption{CausalNL (0.005)}
            \label{fig:causal_transition}
        \end{subfigure}
    \end{minipage}
    \begin{minipage}{0.325\columnwidth}
        \begin{subfigure}[t]{\linewidth}
            \includegraphics[width=\textwidth]{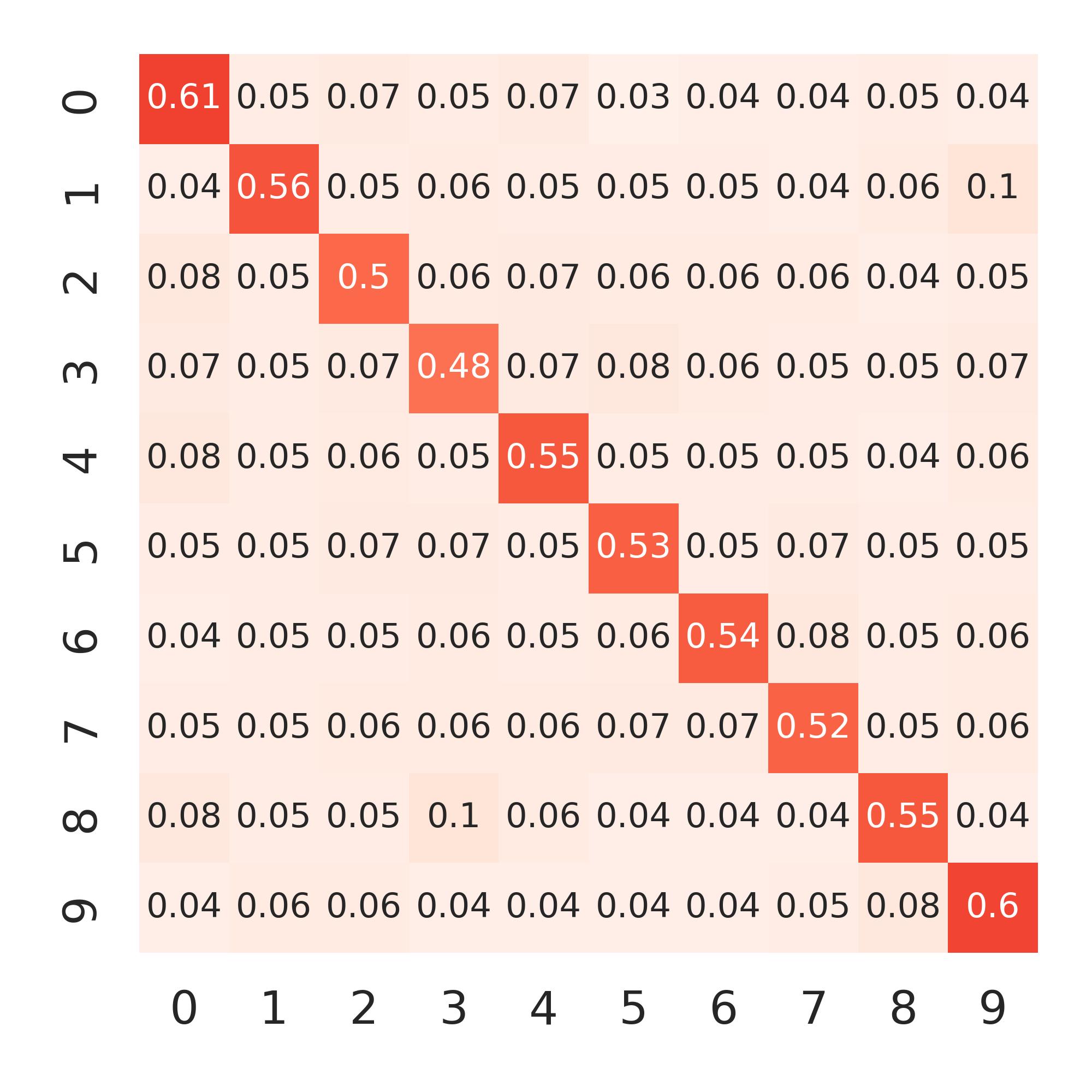}
            \caption{NPC (0.002)}
            \label{fig:npc_transition}
        \end{subfigure}
    \end{minipage}
    \caption{(a) Average of the true transition matrix T of CIFAR-10 with instance dependent noise 40\%, (b)-(e) the estimated transition matrix of each algorithm. Values in parentheses are the mean squared error between estimated $\hat{T}$ and true $T$.}
\label{fig:transition_matrix_comparison}
\end{figure}

Figure \ref{fig:transition_matrix_comparison} shows (a) true transition matrix $T$ and the empirically estimated $\hat{T}$ (b)-(e) from various algorithms for CIFAR-10 dataset with 40$\%$ instance dependent noise. We calculate the mean squared error between true $T$ and estimated $\hat{T}$ from each method. It should be noted that the estimated $\hat{T}$ is an aggregated matrix from the input-wisely estimated matrix. Although the main inference target of NPC is not $T$ but $H$, NPC shows lowest estimation error of $T$, even without any access to the noise ratio. Details of the implemented experiment and more results of transition matrix estimation on other noise conditions are provided in Appendix \ref{appen: t and h}.

\section{Experiments}
\label{sec:exper}
In this section, we first introduce implementation details and baselines for experiments in \ref{sec:datasets and baselines}. Afterwards, we provide quantitative performance results in \ref{sec:quanti} and qualitative analyses in \ref{sec:quali}. We also compare our method with other post-processing methods in \ref{sec:post}, with CausalNL in \ref{sec:causalNL}. Finally, we provide NPC's ability as a detector of potentially noisy samples in the benchmark dataset in \ref{sec:realdata}.

\subsection{Datasets and Baselines}
\label{sec:datasets and baselines}
To verify the efficacy of our proposed method, NPC, we first experiment on three benchmark datasets: \textit{MNIST} \cite{mnist}, \textit{Fashion-MNIST} \cite{fmnist}, and \textit{CIFAR-10} \cite{cifar10}. We inject synthetic label noises following the previous setups of existing works \cite{forward, coteaching, rel, csidn, causalNL}. Four different types of artificial label noise are generated as below. More details on each noise type are provided in Appendix \ref{appen : noisy label generation}. We also consider the clean setting without any noise to analyze the generalization ability of NPC.
\begin{itemize}
    \item \textbf{Symmetric Noise (SN)} flips labels uniformly to other classes \cite{forward, coteaching, rel}.
    \item \textbf{Asymmetric Noise (ASN)} flips labels within similar classes \cite{joint, coteaching, rel}.
    \item \textbf{Instance Dependent Noise (IDN)} flips labels by the probability which is proportional to the feature of a sample \cite{pdn, cores, causalNL}.
    \item \textbf{Similarity Reflected Instance Dependent Noise (SRIDN)} not only considers instance-feature, but also takes into account of class similarity \cite{aaai-idn, csidn}.
\end{itemize}
Also, we compute experiments on two real-world datasets: \textit{Clothing-1M} \cite{clothing1m} and \textit{Food101} \cite{food}. Details of datasets and implementations are provided in Appendix \ref{appen:real_data} and \ref{appen:experiment condition}, respectively.

We demonstrate the performance of NPC as a post-processor, by applying NPC to a classifier pre-trained with baseline methods. The selected algorithms include: (1) \textbf{CE} (Cross Entropy), (2) \textbf{Joint} \cite{joint}, (3) \textbf{Coteaching} \cite{coteaching}, (4) \textbf{JoCoR} \cite{jocor}, (5) \textbf{CORES2} \cite{cores}, (6) \textbf{SCE} \cite{sce}, (7) \textbf{ES} (EarlyStop), (8) \textbf{LS} (Label Smoothing) \cite{ls}, (9) \textbf{REL} \cite{rel}, (10) \textbf{Forward} \cite{forward}, (11) \textbf{DualT} \cite{dualT}, (12) \textbf{TVR} \cite{tvr}, and (13) \textbf{CausalNL} \cite{causalNL}. Characteristics of each classifier are explained in Appendix \ref{baselines}.

\definecolor{Gray}{gray}{0.9}
\begin{table*}[t]
\centering
\resizebox{0.99\textwidth}{!}{%
    \begin{tabular}{c cc ccccccccc ccccccccc}
    \toprule
     & \multicolumn{2}{c }{\textsf{MNIST}} & \multicolumn{9}{c}{\textsf{Fashion-MNIST}} & \multicolumn{9}{c}{\textsf{CIFAR-10}} \\ \cmidrule(lr){2-3}\cmidrule(lr){4-12}\cmidrule(lr){13-21} \multicolumn{1}{c}{\textbf{Model}}
     & \textbf{Clean} & \multicolumn{1}{c}{\textbf{IDN}} & \textbf{Clean} & \multicolumn{2}{c}{\textbf{SN}} & \multicolumn{2}{c}{\textbf{ASN}} & \multicolumn{2}{c}{\textbf{IDN}} & \multicolumn{2}{c}{\textbf{SRIDN}} & \multicolumn{1}{c}{\textbf{Clean}}& \multicolumn{2}{c}{\textbf{SN}} & \multicolumn{2}{c}{\textbf{ASN}} & \multicolumn{2}{c}{\textbf{IDN}} & \multicolumn{2}{c}{\textbf{SRIDN}} \\
     \cmidrule(lr){2-2}\cmidrule(lr){3-3}\cmidrule(lr){4-4}\cmidrule(lr){5-6}\cmidrule(lr){7-8}\cmidrule(lr){9-10}\cmidrule(lr){11-12}\cmidrule(lr){13-13}\cmidrule(lr){14-15}\cmidrule(lr){16-17}\cmidrule(lr){18-19}\cmidrule(lr){20-21}
      & \multicolumn{1}{c}{-} & \multicolumn{1}{c}{40$\%$}& \multicolumn{1}{c}{-} & 20$\%$ & \multicolumn{1}{c}{80$\%$} & 20$\%$ & \multicolumn{1}{c}{40$\%$} & 20$\%$ & \multicolumn{1}{c}{40$\%$} & 20$\%$ & \multicolumn{1}{c}{40$\%$} & \multicolumn{1}{c}{-} & 20$\%$ & \multicolumn{1}{c}{80$\%$} & 20$\%$  & \multicolumn{1}{c}{40$\%$} & 20$\%$ & \multicolumn{1}{c}{40$\%$} & 20$\%$ & \multicolumn{1}{c}{40$\%$} \\ \midrule 
    CE & 97.8 & 66.3 & 87.1 & 74.0 & 27.0 & 81.0 & 77.3 & 68.4 & 52.1 & 81.0 & 67.3 & 86.9 & 73.1 & 15.1 & 80.2 & 71.4 & 72.9 & 53.9 & 72.6 & 61.8 \\
    \rowcolor{Gray} w/ NPC& \textbf{98.2} & \textbf{89.0} & \textbf{88.4} & \textbf{84.0} & \textbf{35.8} & \textbf{85.9} & \textbf{86.2} & \textbf{82.5} & \textbf{74.5} & \textbf{81.8} & \textbf{69.4} & \textbf{89.0} & \textbf{80.8} & \textbf{17.0} & \textbf{84.7} & \textbf{78.8} & \textbf{80.9} & \textbf{59.9} & \textbf{74.3} & \textbf{64.3} \\ \midrule
    Joint & 93.0 & 93.6 & 82.8 & 82.0 & \textbf{6.0} & 82.1 & 82.3 & 82.7 & 82.4 & 80.6 & 74.6 & 83.0 & 78.9 & \textbf{8.3} & 81.5 & 76.8 & 80.4 & 64.5 & 70.6 & 62.2 \\
    \rowcolor{Gray}w/ NPC& \textbf{94.0} & \textbf{94.6} & \textbf{83.6} & \textbf{82.7} & \textbf{6.0} & \textbf{82.9} & \textbf{82.9} & \textbf{83.4} & \textbf{83.0} & \textbf{81.1} & \textbf{75.5} & \textbf{84.4} & \textbf{80.2} & \textbf{8.3} & \textbf{83.0} & \textbf{77.7} & \textbf{80.7} & \textbf{69.1} & \textbf{72.0} & \textbf{63.6} \\ \midrule
    Coteaching & 98.0 & 87.5 & 87.0 & 82.5 & 64.2 & 88.2 & \textbf{73.6} & 81.8 & 75.4 & 84.0 & 75.0 & 88.5 & 82.5 & 29.7 & 86.5 & 76.6 & 81.5 & 75.2 & 75.3 & 66.6 \\
    \rowcolor{Gray}w/ NPC & \textbf{98.3} & \textbf{90.6} & \textbf{88.3} & \textbf{85.8} & \textbf{66.0} & \textbf{88.5} & \textbf{73.6} & \textbf{85.1} & \textbf{78.7} & \textbf{84.2} & \textbf{75.3} & \textbf{89.2} & \textbf{85.3} & \textbf{32.1} & \textbf{87.1} & \textbf{76.8} & \textbf{84.8} & \textbf{78.5} & \textbf{76.1} & \textbf{67.2} \\ \midrule
    JoCoR & 97.8 & 93.3 & 88.7 & 86.0 & 27.6 & 88.9 & 79.4 & 86.3 & 83.2 & 81.9 & 71.3 & 89.1 & 83.6 & 24.8 & 82.6 & 73.3 & 82.8 & 75.3 & 75.2 & 66.1 \\
    \rowcolor{Gray}w/ NPC & \textbf{98.3} & \textbf{96.1} & \textbf{89.8} & \textbf{88.0} & \textbf{31.5} & \textbf{89.2} & \textbf{82.7} & \textbf{88.0} & \textbf{85.7} & \textbf{82.2} & \textbf{72.3} & \textbf{89.3} & \textbf{86.0} & \textbf{27.0} & \textbf{85.1} & \textbf{79.0} & \textbf{85.8} & \textbf{80.1} & \textbf{75.9} & \textbf{66.7} \\ \midrule
    CORES2 & 97.0 & 48.8 & 87.2 & 74.6 & 8.9 & 77.6 & 74.3 & 80.0 & 58.1 & 81.3 & 71.2 & 87.1 & 70.1 & \textbf{31.2} & 79.0 & 71.2 & 70.3 & 50.9 & 72.8 & 62.0 \\
    \rowcolor{Gray}w/ NPC & \textbf{98.0} & \textbf{67.2} & \textbf{88.5} & \textbf{84.3} & \textbf{10.2} & \textbf{82.5} & \textbf{81.0} & \textbf{84.0} & \textbf{69.6} & \textbf{82.2} & \textbf{74.9} & \textbf{88.2} & \textbf{80.4} & 30.7 & \textbf{84.2} & \textbf{80.4} & \textbf{80.4} & \textbf{65.6} & \textbf{74.2} & \textbf{64.1} \\ \midrule
    SCE  & 97.7 & 66.6 & 87.0 & 74.0 & 27.0 & 82.0 & 77.4 & 68.3 & 52.0 & 81.1 & 67.5 & 86.9 & 73.1 & 15.1 & 80.2 & 71.4 & 72.9 & 53.9 & 72.6 & 61.8 \\
    \rowcolor{Gray}w/ NPC & \textbf{98.2} & \textbf{88.7} & \textbf{88.3} & \textbf{83.7} & \textbf{35.5} & \textbf{86.4} & \textbf{86.7} & \textbf{82.0} & \textbf{75.2} & \textbf{81.8} & \textbf{69.7} & \textbf{87.4} & \textbf{75.0} & \textbf{15.2} & \textbf{81.5} & \textbf{75.2} & \textbf{75.4} & \textbf{55.6} & \textbf{72.9} & \textbf{62.5} \\ \midrule
    Early Stop & 96.5 & 73.3 & 87.5 & 83.6 & 49.5 & 84.1 & 76.6 & 79.5 & 55.4 & 83.3 & 72.6 & 83.0 & 79.1 & 18.0 & 80.9 & 70.6 & 77.1 & 62.5 & 71.4 & 60.6 \\
    \rowcolor{Gray}w/ NPC & \textbf{97.9} & \textbf{90.8} & \textbf{88.7} & \textbf{85.9} & \textbf{62.9} & \textbf{87.6} & \textbf{87.1} & \textbf{84.3} & \textbf{75.3} & \textbf{84.0} & \textbf{76.0} & \textbf{84.0} & \textbf{82.5} & \textbf{18.2} & \textbf{81.2} & \textbf{72.0} & \textbf{79.4} & \textbf{65.1} & \textbf{72.1} & \textbf{63.0} \\ \midrule
    LS  & 97.8 & 66.2 & 87.5 & 73.9 & 27.8 & 81.5 & 77.0 & 69.0 & 52.5 & 81.1 & 67.5 & 86.9 & 73.1 & 15.1 & 80.2 & 71.4 & 72.9 & 53.9 & 72.6 & 61.8 \\
    \rowcolor{Gray}w/ NPC & \textbf{98.2} & \textbf{88.6} & \textbf{88.6} & \textbf{83.7} & \textbf{35.2} & \textbf{86.0} & \textbf{86.4} & \textbf{82.2} & \textbf{74.7} & \textbf{81.6} & \textbf{69.5} & \textbf{89.0} & \textbf{80.8} & \textbf{15.5} & \textbf{84.7} & \textbf{78.8} & \textbf{80.9} & \textbf{59.9} & \textbf{74.3} & \textbf{64.3} \\ \midrule
    REL & \textbf{98.0} & 90.7 & \textbf{88.1} & 84.6 & 70.1 & 82.8 & 76.2 & \textbf{84.6} & 75.5 & \textbf{83.7} & 78.1 & 80.7 & 74.9 & 21.2 & 72.8 & 69.9 & 75.5 & \textbf{51.8} & 69.3 & 63.8 \\
    \rowcolor{Gray}w/ NPC & 97.9 & \textbf{95.5} & 86.9 & \textbf{85.0} & \textbf{70.3} & \textbf{85.3} & \textbf{83.0} & 83.8 & \textbf{80.1} & 82.9 & \textbf{78.3} & \textbf{83.4} & \textbf{78.6} & \textbf{26.0} & \textbf{75.9} & \textbf{76.1} & \textbf{78.5} & 51.2 & \textbf{70.7} & \textbf{64.2} \\ \midrule
    Forward & 98.0 & 67.9 & 88.5 & 77.4 & 24.3 & 83.3 & 79.2 & 75.2 & 56.9 & 82.4 & 69.5 & 85.3 & 71.8 & 16.9 & 78.2 & 70.1 & 70.2 & 54.5 & 73.2 & 63.5 \\
    \rowcolor{Gray}w/ NPC & \textbf{98.4} & \textbf{91.1} & \textbf{89.6} & \textbf{85.3} & \textbf{33.0} & \textbf{87.2} & \textbf{86.8} & \textbf{86.8} & \textbf{80.5} & \textbf{83.3} & \textbf{73.7} & \textbf{88.7} & \textbf{81.5} & \textbf{17.2} & \textbf{83.8} & \textbf{74.5} & \textbf{80.3} & \textbf{63.3} & \textbf{74.8} & \textbf{65.0} \\ \midrule
    DualT & 96.7 & 94.3 & 86.3 & 84.5 & \textbf{10.0} & 86.9 & 83.1 & 85.1 & 68.5 & 82.7 & 73.2 & 84.3 & 79.3 & 7.6 & 80.6 & 77.1 & 78.6 & 71.2 & 68.7 & 63.1 \\
    \rowcolor{Gray}w/ NPC & \textbf{97.8} & \textbf{96.6} & \textbf{88.2} & \textbf{85.9} & \textbf{10.0} & \textbf{87.6} & \textbf{84.3} & \textbf{86.3} & \textbf{72.3} & \textbf{83.4} & \textbf{74.9} & \textbf{86.0} & \textbf{83.0} & \textbf{8.4} & \textbf{83.0} & \textbf{77.5} & \textbf{81.0} & \textbf{77.3} & \textbf{70.1} & \textbf{64.0} \\ \midrule
    TVR & 97.7 & 64.4 & 87.0 & 72.6 & 24.9 & 80.6 & 76.4 & 66.3 & 51.7 & 81.4 & 67.7 & 86.7 & 71.9 & 15.2 & 78.5 & 71.2 & 72.3 & 53.6 & 72.2 & 62.2 \\
    \rowcolor{Gray}w/ NPC & \textbf{98.1} & \textbf{84.5} & \textbf{88.3} & \textbf{82.3} & \textbf{31.9} & \textbf{84.9} & \textbf{85.3} & \textbf{79.8} & \textbf{73.6} & \textbf{82.1} & \textbf{70.3} & \textbf{88.3} & \textbf{80.8} & \textbf{15.7} & \textbf{84.1} & \textbf{76.5} & \textbf{80.8} & \textbf{60.7} & \textbf{74.5} & \textbf{64.5} \\ \midrule
    CausalNL & 98.1 & 85.2 & 88.1 & 84.0 & 51.5 & 88.8 & 87.4 & 83.4 & 75.2 & 82.0 & 71.2 & 89.6 & 79.9 & 17.0 & 84.6 & 74.8 & 79.9 & 60.4 & 74.6 & 63.5 \\
    \rowcolor{Gray}w/ NPC & \textbf{98.6} & \textbf{94.5} & \textbf{89.4} & \textbf{87.0} & \textbf{58.9} & \textbf{89.3} & \textbf{88.7} & \textbf{87.6} & \textbf{83.3} & \textbf{83.3} & \textbf{74.1} & \textbf{89.7} & \textbf{81.2} & \textbf{18.8} & \textbf{85.0} & \textbf{74.8} & \textbf{81.2} & \textbf{71.9} & \textbf{75.3} & \textbf{63.9} \\ \bottomrule
    \end{tabular}%
}
\caption{Test accuracies for MNIST, Fashion MNIST and CIFAR-10 datasets with their labels corrupted by four types of noisy label conditions. We demonstrate averaged performances computed by baselines and the post-hoc performances after applying NPC. The experimental results are averaged values over five trials. \textbf{Bolded} text denotes the one with better performance. For MNIST, experiment results on other noise conditions are provided in Appendix \ref{appen:MNIST result}.}
\label{tab:acc_synthetic}
\end{table*}

\subsection{Classification Performance on Noisy Datasets}
\label{sec:quanti}

Table \ref{tab:acc_synthetic} shows the experimental results on three synthetic datasets with the four types of noisy types of various noisy ratios. By applying NPC to the selected 13 baselines as a post-processing model, we have total of 351 experimental cells including the clean cases. With five replications with different seeds, we got 341 cells with statistically significant improvement by NPC.

This result demonstrates that NPC is widely applicable to any type of pre-trained classifiers; and that NPC effectively calibrates any types of noisy labels without any access to noise information, i.e. noise ratio and noise type. It is noteworthy that NPC achieves impressive performances on IDN conditions on average, and we conjecture that with its generative modelling structure, NPC successfully calibrates instance dependent noise, which is quite a realistic noisy label generation process. Moreover, we observe that there are consistent performance enhancements in the clean cases. These clean-case improvements state that NPC also improves the generalization of the applied classifier.

\begin{table}[h]
    \resizebox{0.95\columnwidth}{!}{%
    \centering
    \begin{tabular}{c cccc}
        \toprule  & \multicolumn{2}{c}{\textsf{Food-101}}&\multicolumn{2}{c}{\textsf{Clothing1M}} \\ \cmidrule(lr){2-3}\cmidrule(lr){4-5}
        Method & w.o/ NPC & w/ NPC & w.o/ NPC & w/ NPC\\ \midrule
        CE & 78.37& \textbf{80.21}$\pm$\scriptsize{0.2}&68.14 & \textbf{70.83}$\pm$\scriptsize{0.1}\\
        Early Stop  & 73.22 & \textbf{76.80}$\pm$\scriptsize{0.3} & 67.07 & \textbf{70.21}$\pm$\scriptsize{0.1}\\
        SCE &75.23 & \textbf{78.26}$\pm$\scriptsize{0.3}&67.77 & \textbf{70.36}$\pm$\scriptsize{0.1}\\
        REL & 78.96& 78.95$\pm$\scriptsize{0.4} & 62.53& \textbf{64.83}$\pm$\scriptsize{0.1}\\
        Forward & 83.76& 83.77$\pm$\scriptsize{0.3}& 66.86 & \textbf{70.02}$\pm$\scriptsize{0.1}\\
        DualT &57.46 &\textbf{61.82}$\pm$\scriptsize{0.7} & 70.18& 69.99$\pm$\scriptsize{0.4}\\
        TVR & 77.34& \textbf{79.37}$\pm$\scriptsize{0.1}&67.18 & \textbf{69.44}$\pm$\scriptsize{0.1}\\
        CausalNL & 86.08 & \textbf{86.29}$\pm$\scriptsize{0.0} & 68.31 & \textbf{69.90}$\pm$\scriptsize{0.2}\\ \bottomrule
    \end{tabular}
    }
    \caption{Classification accuracy on \textit{Food-101} and \textit{Clothing1M}. w.o. means without and w. means with. Experiments are replicated over 5 times. \textbf{Bolded} text denotes improved performance with statistical significance.}
    \label{tab:comparison_real}
\end{table}

\noindent
While the previous results originate from synthetic datasets, Table \ref{tab:comparison_real} shows the experimental results on \textit{Food-101} and \textit{Clothing1M}, which are datasets with real world noisy labels with its test dataset containing human-annotated labels. Again, NPC improves the performance of baseline classifiers on their classification accuracy with statistical significance in 13 out of 16 experimental cells.

\subsection{Qualitative Analysis of $q_{\phi}(y|\hat{y},x)$}
\label{sec:quali}
Figure \ref{fig:tsne_comparison} shows t-SNE mapping views and confusion matrices of our samples from variational posterior, $q_{\phi}(y|\hat{y},x)$\footnote{The analysis is based on MNIST dataset with 40\% IDN.}. Each dot is colored by the original classifier output $\hat{y}$ (Figure \ref{fig:tsne_upper}) or $y^{*}=argmax_{y}P(y|x)$ calibrated with NPC (Figure \ref{fig:tsne_lower}), respectively. Comparing two t-SNE views, we can observe that the latent variable $y$ is well clustered to reflect the class information of a sample.

\begin{figure}[h]
\vskip-0.1in
    \centering
    \begin{subfigure}[t]{0.242\columnwidth}
        \includegraphics[width=\textwidth]{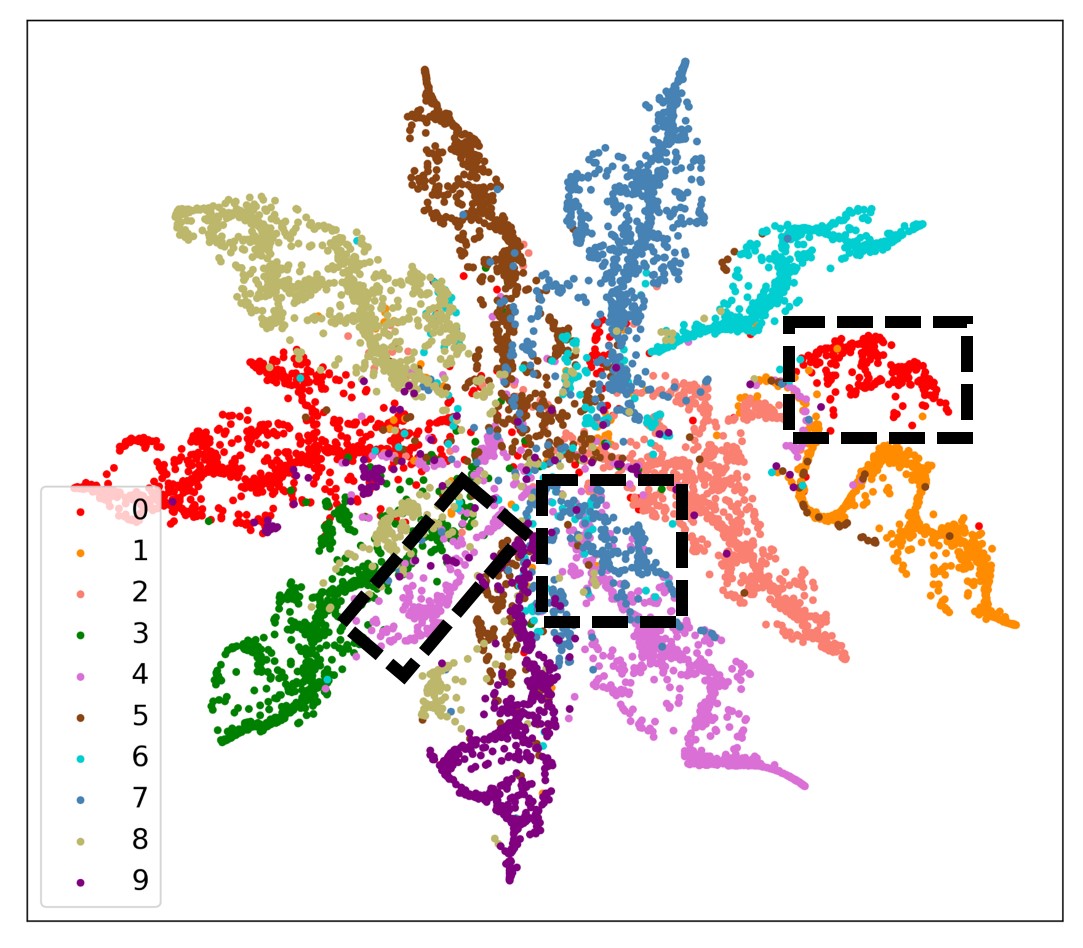}
        \caption{$\hat{y}$}
        \label{fig:tsne_upper}
    \end{subfigure}
    \begin{subfigure}[t]{0.242\columnwidth}
        \includegraphics[width=\textwidth]{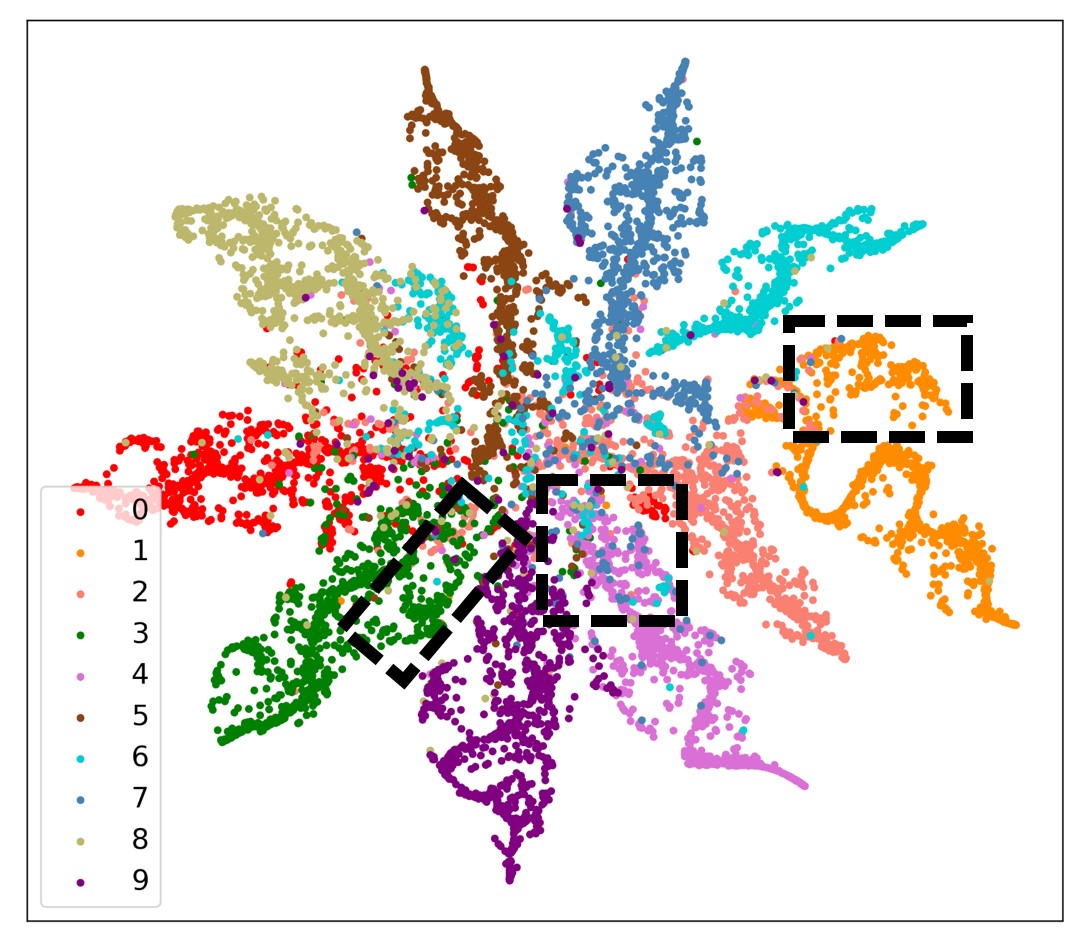}
        \caption{$y^{*}$}
        \label{fig:tsne_lower}
    \end{subfigure}
    \begin{subfigure}[t]{0.242\columnwidth}
        \includegraphics[width=\textwidth]{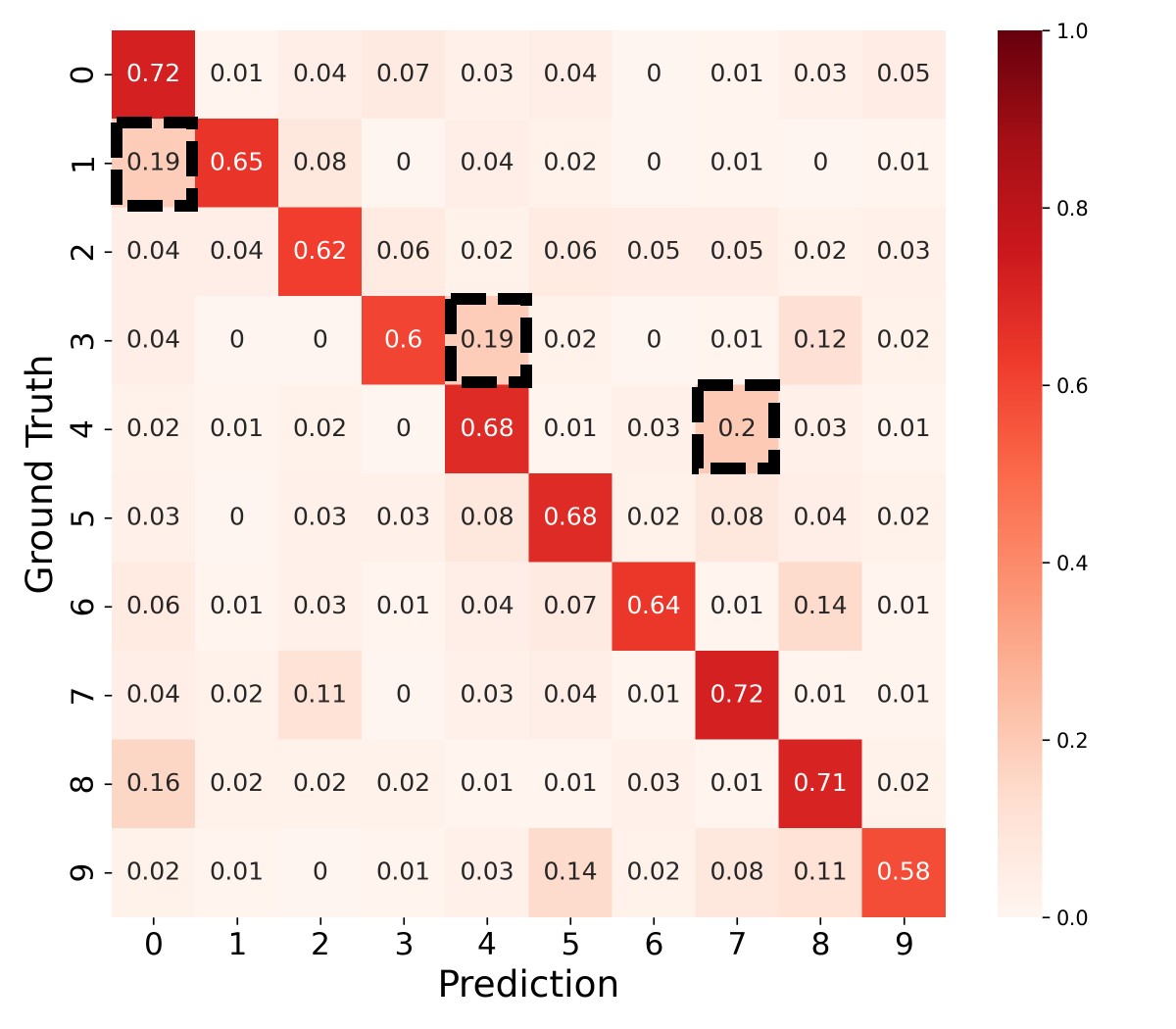}
        \caption{$\hat{y}\rightarrow y$}
        \label{fig:heatmap_upper}
    \end{subfigure}
    \begin{subfigure}[t]{0.242\columnwidth}
        \includegraphics[width=\textwidth]{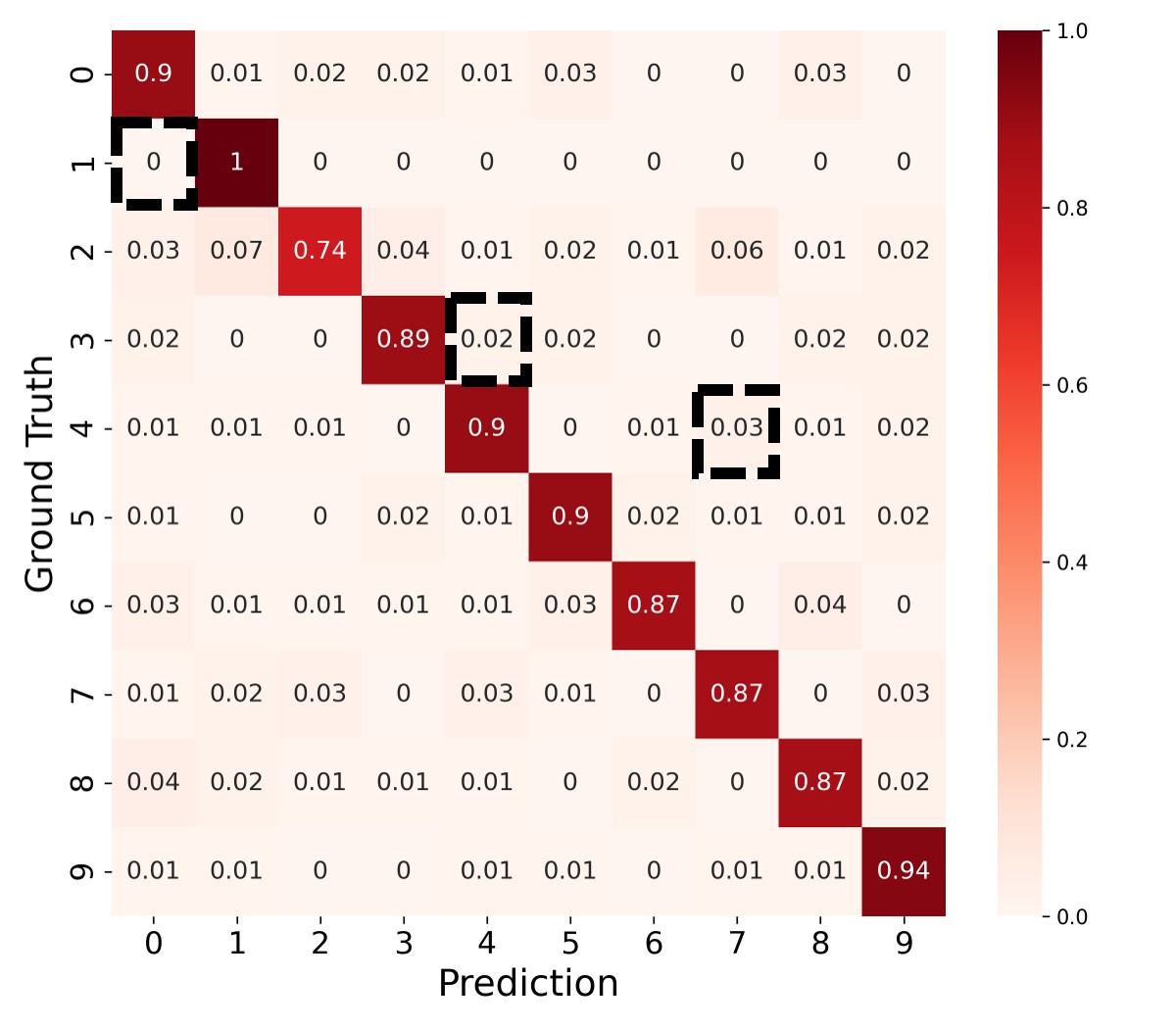}
        \caption{$y^{*}\rightarrow y$}
        \label{fig:heatmap_lower}
    \end{subfigure}
    \caption{t-SNE mapping view of our latent $q_{\phi}(y|\hat{y},x)$ (a, b) and corresponding confusion matrix (c, d). Colors of t-SNE views and columns of the confusion matrix represent $\hat{y}$ and $y^{*}$, respectively.}
\label{fig:tsne_comparison}
\end{figure}

Figure \ref{fig:heatmap_upper} and \ref{fig:heatmap_lower} illustrates the confusion matrix, each comparing the prediction of a classifier and the result after calibration with NPC to the true label. Comparing the two figures, we can see the intrusion cases of $\hat{y}$ in a cluster of $y$, which indicates the misclassified result from the original classifier\footnote{Another thing to focus is that the average of the diagonal terms of the confusion matrix is larger than $0.6$, indicating that $\hat{y}$ is at least better than the original noisy label, $\tilde{y}$.}, become fewer with the calibrated $y^{*}$ from NPC. In other words, with darker diagonal colours, we analyze that $y^{*}$ is more similar to true $y$ than $\hat{y}$.

By relating t-SNE figures and the confusion matrix together, we can implicitly find out the meaning of color change of dots from Figure \ref{fig:tsne_upper} to Figure \ref{fig:tsne_lower}. Black dashed boxes show visible differences between $\Hat{y}$ and $y^{*}$. For example, red dots ($\Hat{y}=0$) of the right side in Figure \ref{fig:tsne_upper} is wrongly classified as the confusion matrix in Figure \ref{fig:heatmap_upper}. On the contrary, the labels of those samples have changed to orange color ($y^{*}=1$), its true labels. These calibrations of classifier outputs are the reason behind the performance gain in Table \ref{tab:acc_synthetic} and \ref{tab:comparison_real}.

\subsection{NPC as a Post-processor}
\label{sec:post}

Our method, NPC, modifies predictions from the classifier after training. In this section, we first analyze the efficacy of NPC compared to other post-processing methods (\ref{sec:post_process}). Second, we discuss the similarities and differences between label correction framework and post-processing (\ref{sec:lc and pp}), We further discuss the empirical convergence of NPC from the iterative application (in the Appendix \ref{sec:iter npc}).

\subsubsection{Comparison with Post-processors}
\label{sec:post_process}

\begin{figure}[h]
\centering
\includegraphics[width=0.95\columnwidth]{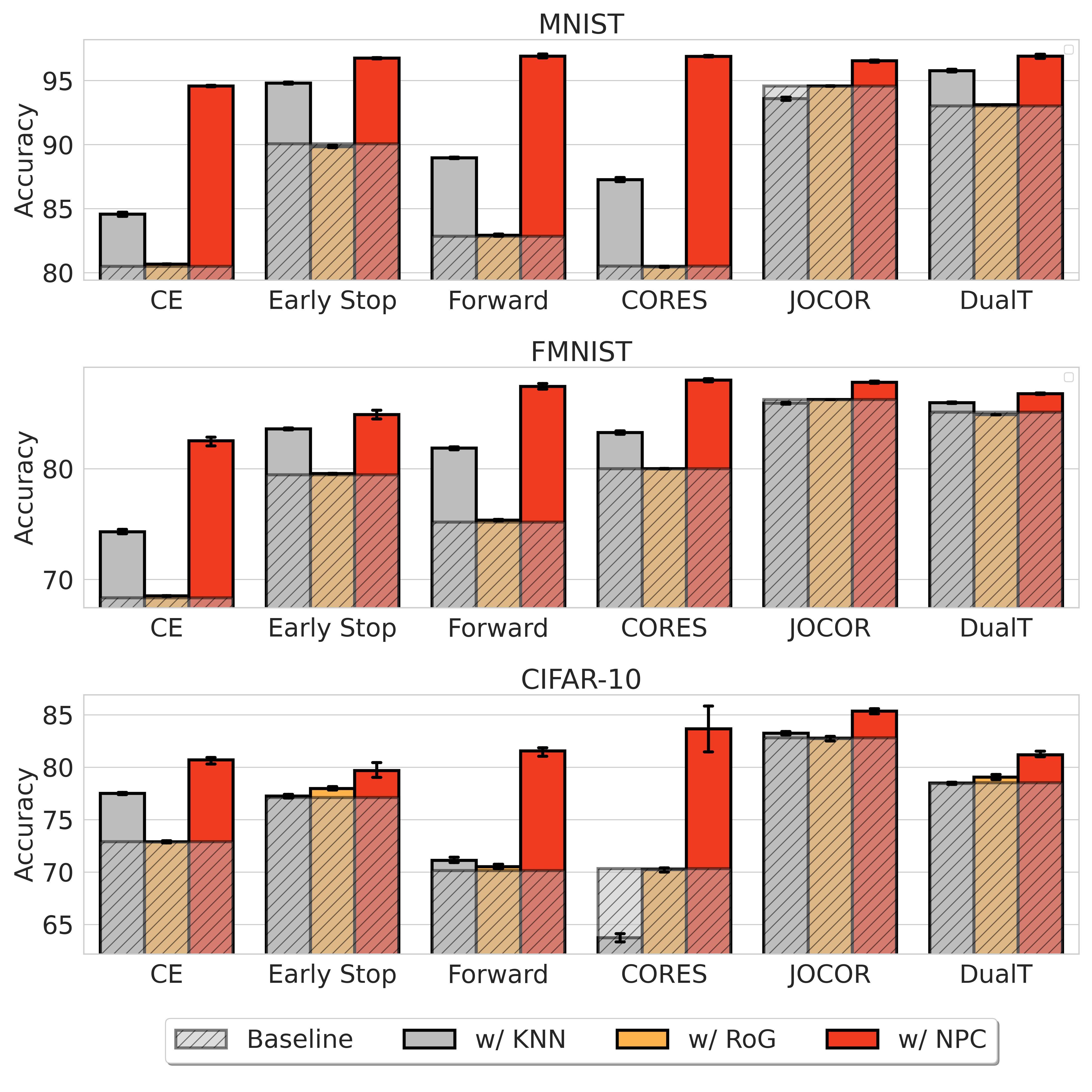}
\vskip-0.1in
\caption{Classification accuracy comparison applied on various baseline methods. Results are averaged over five trials with standard deviations. We denote the baseline as the algorithm utilized for training classifier.}
\label{fig:post_processing}
\vskip-0.1in
\end{figure}

In this paper, we do not recognize the methods which optimize or re-train classifier parameters during the inference, as post-processors. 
In other words, we treat the the methods as a post-processors only when they manage the relationship between input features and labels, without modifying the classifier parameters.

\noindent
\textbf{KNN Prior} KNN prior, which we utilize as a prior for inference of our posterior, could be recognized as a post-processor itself. Therefore, we compare the performance of KNN with NPC.

\noindent
\textbf{RoG} \cite{rog} is a well-known post processing method for noisy label classification, as well. RoG assumes that samples with wrong labels are mostly outliers, so RoG removes outliers from the measured mahalanobis-distance \cite{mahalanobis}, hypothesizing a classifier to be more robust to noisy labels. 

Figure \ref{fig:post_processing} indicates that NPC performs better than KNN and RoG on several baselines. Particularly, there is a significant gap between KNN and NPC, which emphasizes the importance of modeling posterior distribution via generative model, other than the prior setting on $y$. We also report the result on Food-101 and Clothing1M in Appendix \ref{appen:post_processing_real_dataset}.

\subsubsection{Label Correction and Post-processing}
\label{sec:lc and pp}
Label correction method, such as Joint \cite{joint}, LRT \cite{lrt} and MLC \cite{meta} have similarity with post-processing methods in that both utilizes the prediction of a given classifier to correct the noisy label. They are different, however, considering the classifier training: label correction methods iteratively correct the labels and train the classifier, which requires access to the classifier parameters. Post-processing methods are different from them because it does not entail classifier training. Having said that, by replacing noisy label $\tilde{y}$ with prediction $\hat{y}$ of a pre-trained classifier and training a new classifier, existing methods can be utilized as post-processor. Therefore, we focus on label correction methods and the utility of these methods as post-processor.

\begin{table}[h]
\centering
\resizebox{\columnwidth}{!}{%
\begin{tabular}{c|cccc|cccc}
\toprule Method & \multicolumn{4}{c|}{Label Correction} & \multicolumn{4}{c}{Post-processing} \\ \cmidrule(lr){0-1}\cmidrule(lr){2-5}\cmidrule(lr){6-9}
Noise & Joint& LRT& MLC&CauseNL&LRT$^{*}$&MLC$^{*}$&CauseNL$^{*}$&NPC\\\midrule
SN &80.0$\pm$\scriptsize{0.6} &82.9$\pm$\scriptsize{0.2}&71.1$\pm$\scriptsize{1.9}&77.2$\pm$\scriptsize{1.5}&82.7$\pm$\scriptsize{0.1}&82.2$\pm$\scriptsize{1.9}&83.5$\pm$\scriptsize{0.5}&\textbf{85.3}$\pm$\scriptsize{0.3}\\\midrule
IDN &78.6$\pm$\scriptsize{1.3} &82.5$\pm$\scriptsize{0.2}&72.2$\pm$\scriptsize{2.6}&78.4$\pm$\scriptsize{1.7}&82.9$\pm$\scriptsize{0.2}&82.1$\pm$\scriptsize{0.4}&83.3$\pm$\scriptsize{0.5}&\textbf{84.8}$\pm$\scriptsize{0.1}\\ \bottomrule
\end{tabular}%
}
\caption{Test accuracy on CIFAR-10 with different noise types. Noise ratio are $20\%$. Experiments are repeated over 5 times. For post-processing models, we utilize the prediction trained with Co-teaching method for fair comparison with CausalNL.}
\label{tab:comparison_with_post_processing}
\end{table}

Table \ref{tab:comparison_with_post_processing} shows the model performance of label correction methods and their applicability as post-processors. Baselines without and with an asterisk mean the original version and the post-processor of models, respectively. NPC shows best performance with statistical significance.

\subsection{NPC as a Generative Model}
\label{sec:causalNL}
\subsubsection{What CausalNL and NPC focus from $X$}

To understand what information each model focuses on from $X$ to predict $Y$, we utilize GradCAM method \cite{gradcam}. Figure \ref{fig:gradcam} shows that NPC focuses on the class-related features while CausalNL also pays attention to other parts, which is driven by the generation of $X$. For example, to classify an image of \textit{Churros} correctly, NPC mainly understands the part of features largely related to the \textit{Churros} itself, while CausalNL captures both the \textit{Churros} part and the \textit{Chocolate} part to generate the image itself also well.
\begin{figure}[h]
\centering
	\vskip-0.1in
	\includegraphics[width=0.99\columnwidth]{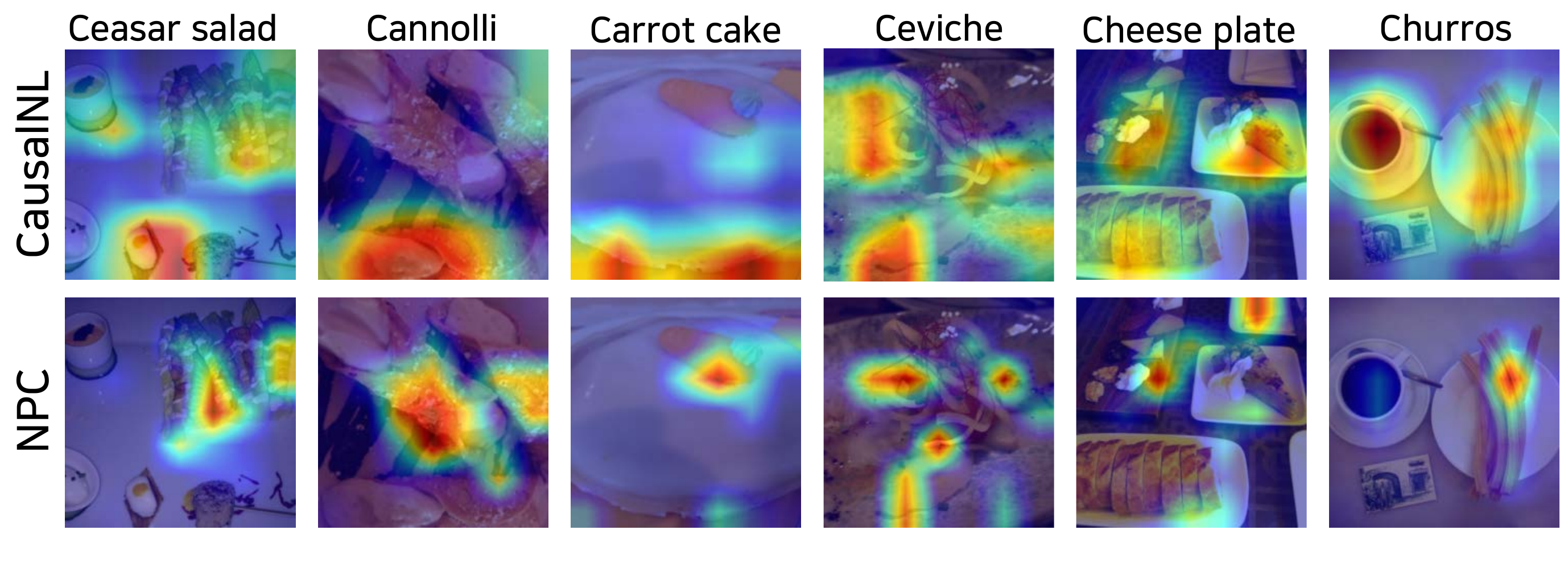}
	\vskip-0.1in
	\caption{GradCAM results on Food-101. CausalNL tends to capture the whole image, while NPC focuses on the class-related features. Texts on each Column denotes the label of each image.}
	\vskip-0.1in
	\label{fig:gradcam}
\end{figure}
\subsubsection{Failure Case Analysis}
\begin{figure}[h]
\centering
	\vskip-0.1in
	\includegraphics[width=0.99\columnwidth]{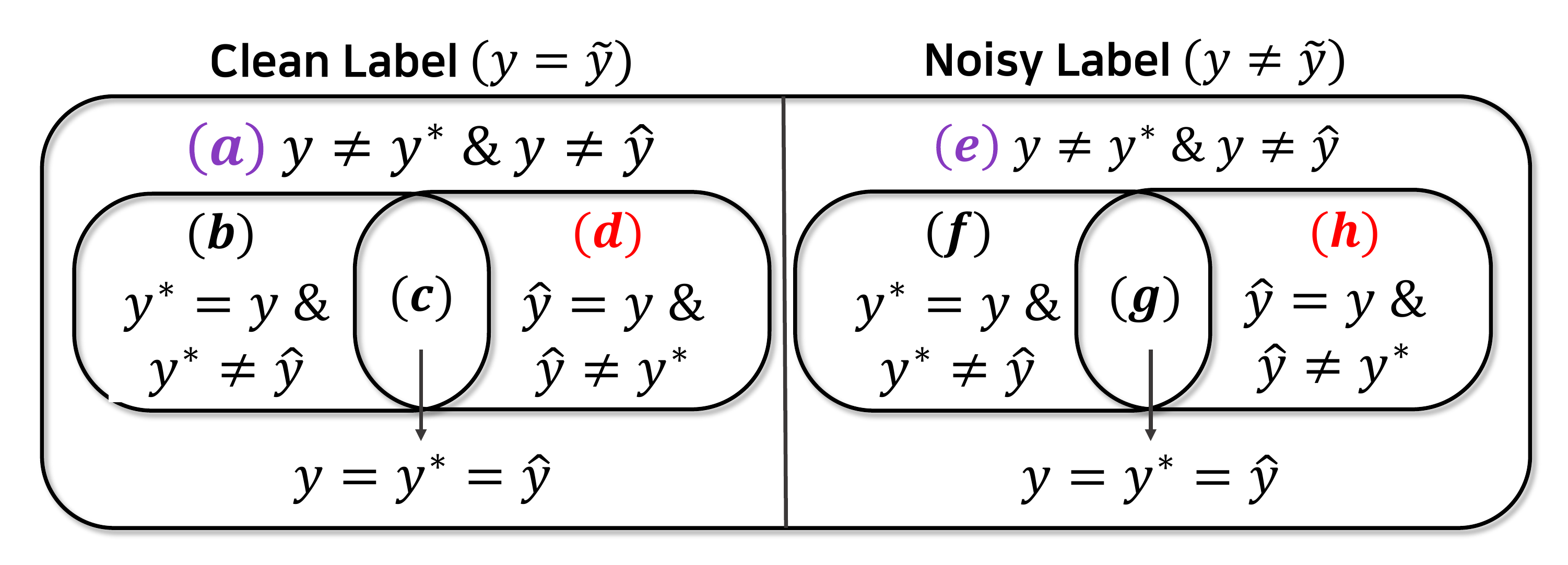}
	\vskip-0.1in
	\caption{Venn diagram by relation between true class ($y$), noisy label ($\tilde{y}$), classifier model prediction ($\hat{y}$) and model prediction after post-processing ($y^{*}$)}
	\label{fig:ven_diagram}
\end{figure}
Figure \ref{fig:ven_diagram} is a Venn-Diagram to illustrate the count on gains and losses from post-processing. An analysis on four cases are required to evaluate the performance of a post-processor; 1) miss from classifier and miss from post-processor $((a),(e))$, 2) miss from classifier and hit from post-processor $((b),(f))$, 3) hit from classifier and hit from post-processor $((c),(g))$, and 4) hit from classifier and miss from post-processor $((d),(h))$. Reducing case 4) as small as possible while increasing case 2) would indicate the best case for post-processors. Table \ref{tab:failure_case_analyses} shows the results on NPC and CausalNL$^{*}$, which is the post-processor version of CausalNL.

\begin{table}[h]
    \centering
    \resizebox{0.99\columnwidth}{!}{%
    \begin{tabular}{c | cccc|cccc}
        \toprule  
         & (a) & (b) & (c) & (d)& (e) & (f) & (g) & (h) \\ \midrule
        NPC &8 & 89 & 39799 &86 & 9035 & 949 &15 &19 \\
        CausalNL$^{*}$ &39 &58 &32459 &7426 &2446 &7538 &31 &3\\ \bottomrule
    \end{tabular}
    }
    \caption{The number of samples on each region for Venn diagram in Figure \ref{fig:ven_diagram}. Results from CIFAR-10 with Symmetric Noise (20${\%}$)}
    \vskip-0.1in
    \label{tab:failure_case_analyses}
\end{table}
Computing $\{(b)+(f)\}-\{(d)+(h)\}$, NPC and CausalNL$^{*}$ each shows 933, 167 counts, making NPC 5.59 times more effective than CausalNL$^{*}$. Comparing cases on NPC and CausalNL$^{*}$ explains the difference of modeling nature between two methods. With NPC managing the relation of noisy prediction and the true label given $X$, NPC becomes a cautious corrector. However, CausalNL explores wider range of $Y$ with $X$ generation included in training objective, making CausalNL$^{*}$ a risk taking corrector.
\subsection{NPC Identifies Potential Noises in Benchmarks}
\label{sec:realdata}
In this section, we provide the detection results of potentially noisy instances from the learning of NPC on benchmark datasets. We first capture all test samples whose label annotations and predictions from NPC are different. From them, we select 50 samples with highest prediction confidences and show 20 human-picked samples with marks of original annotations and predictions. From MNIST, we found out that images, which were captured by NPC, could be written differently from the original intention of annotators. Moreover, the samples from Fashion-MNIST are observed to have a possibility to be wrongly labeled, e.g. confusing features between Sneaker and Ankle boot.
\begin{figure}[h]
\centering
\includegraphics[width=0.95\columnwidth]{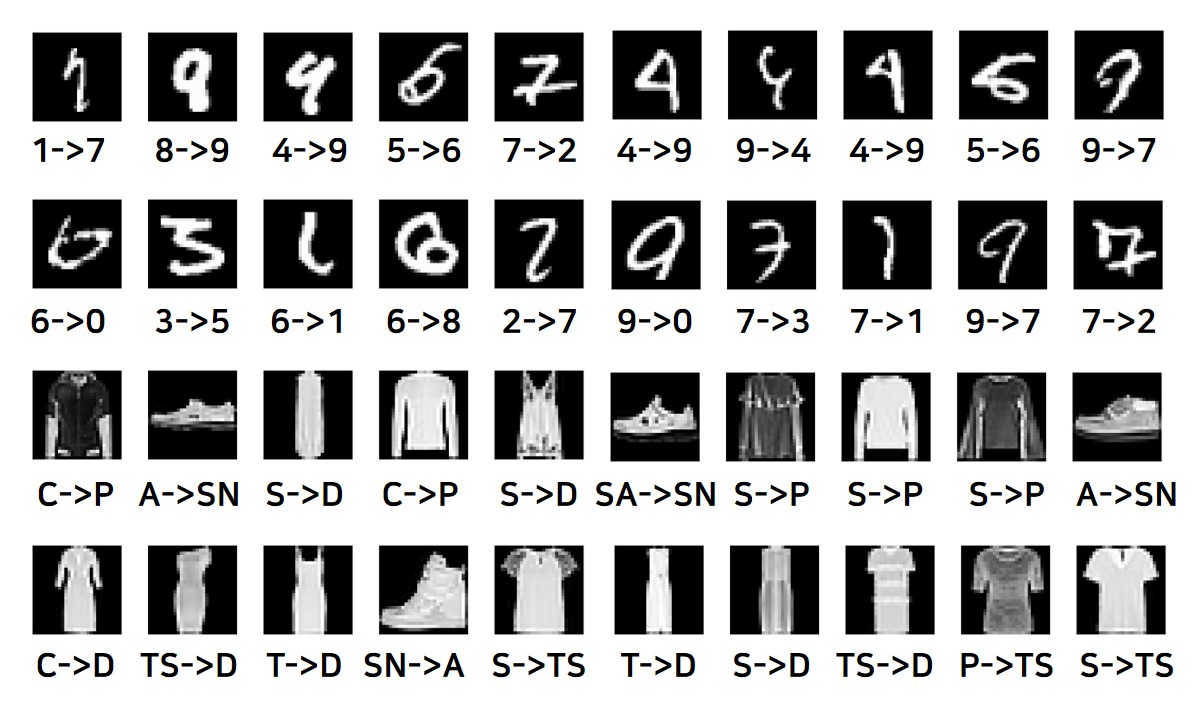}
\vskip-0.1in
\caption{The selected samples of MNIST (upper two rows) and FMNIST (under two rows) from the set of instances whose annotations and predictions from NPC are different. The marks below images denote (label $\rightarrow$ prediction). The abbreviation for Fashion-MNIST means as follows:
$\{$C: Coat, P: Pullover, A: Ankle boot, SN: Sneaker, D: Dress, SA: Sandal, S: Shirt,TS: T-shirt/top$\}$
    }
\vskip-0.15in
\label{fig:figuremnist}
\end{figure}
\section{Conclusion}
\label{sec:conclusion}
In this paper, motivated by the possible failure of the classifier trained with noisy labels, we provide a novel method, Noisy Prediction Calibration (NPC), to calibrate noisy prediction from a classifier to a true label. By explicitly modelling the relation between the output of a classifier and the true label, NPC opens up new possibilities for formulating noisy label problems. With the provided experiments on several types of noise settings, NPC proves its effectiveness to improve the robustness of a classifier's prediction in various situations. Also, we believe that our methodological framework, which calibrates the prediction of given classifier, can be actively utilized on other fields of machine learning, e.g., long tailed recognition, domain adaptation.

\section*{Acknowledgements}

This research was supported by AI Technology Development for Commonsense Extraction, Reasoning, and Inference from Heterogeneous Data(IITP) 
funded by the Ministry of Science and ICT(2022-0-00077)

\bibliography{example_paper}

\begin{thebibliography}{48}
\providecommand{\natexlab}[1]{#1}
\providecommand{\url}[1]{\texttt{#1}}
\expandafter\ifx\csname urlstyle\endcsname\relax
  \providecommand{\doi}[1]{doi: #1}\else
  \providecommand{\doi}{doi: \begingroup \urlstyle{rm}\Url}\fi

\bibitem[Arpit et~al.(2017)Arpit, Jastrzebski, Ballas, Krueger, Bengio, Kanwal,
  Maharaj, Fischer, Courville, Bengio, et~al.]{arpit}
Arpit, D., Jastrzebski, S., Ballas, N., Krueger, D., Bengio, E., Kanwal, M.~S.,
  Maharaj, T., Fischer, A., Courville, A., Bengio, Y., et~al.
\newblock A closer look at memorization in deep networks.
\newblock In \emph{International Conference on Machine Learning}, pp.\
  233--242. PMLR, 2017.

\bibitem[Bahri et~al.(2020)Bahri, Jiang, and Gupta]{knn2}
Bahri, D., Jiang, H., and Gupta, M.
\newblock Deep k-nn for noisy labels.
\newblock In \emph{International Conference on Machine Learning}, pp.\
  540--550. PMLR, 2020.

\bibitem[Berthon et~al.(2021)Berthon, Han, Niu, Liu, and Sugiyama]{csidn}
Berthon, A., Han, B., Niu, G., Liu, T., and Sugiyama, M.
\newblock Confidence scores make instance-dependent label-noise learning
  possible.
\newblock In \emph{International Conference on Machine Learning}, pp.\
  825--836. PMLR, 2021.

\bibitem[Bossard et~al.(2014)Bossard, Guillaumin, and Van~Gool]{food}
Bossard, L., Guillaumin, M., and Van~Gool, L.
\newblock Food-101 -- mining discriminative components with random forests.
\newblock In Fleet, D., Pajdla, T., Schiele, B., and Tuytelaars, T. (eds.),
  \emph{Computer Vision -- ECCV 2014}, pp.\  446--461, Cham, 2014. Springer
  International Publishing.
\newblock ISBN 978-3-319-10599-4.

\bibitem[Brown et~al.(2020)Brown, Mann, Ryder, Subbiah, Kaplan, Dhariwal,
  Neelakantan, Shyam, Sastry, Askell, et~al.]{gpt3}
Brown, T.~B., Mann, B., Ryder, N., Subbiah, M., Kaplan, J., Dhariwal, P.,
  Neelakantan, A., Shyam, P., Sastry, G., Askell, A., et~al.
\newblock Language models are few-shot learners.
\newblock \emph{arXiv preprint arXiv:2005.14165}, 2020.

\bibitem[Chen et~al.(2020)Chen, Ye, Chen, Zhao, and Heng]{aaai-idn}
Chen, P., Ye, J., Chen, G., Zhao, J., and Heng, P.-A.
\newblock Beyond class-conditional assumption: A primary attempt to combat
  instance-dependent label noise.
\newblock \emph{arXiv preprint arXiv:2012.05458}, 2020.

\bibitem[Cheng et~al.(2021)Cheng, Zhu, Li, Gong, Sun, and Liu]{cores}
Cheng, H., Zhu, Z., Li, X., Gong, Y., Sun, X., and Liu, Y.
\newblock Learning with instance-dependent label noise: {A} sample sieve
  approach.
\newblock In \emph{9th International Conference on Learning Representations,
  {ICLR} 2021, Virtual Event, Austria, May 3-7, 2021}. OpenReview.net, 2021.
\newblock URL \url{https://openreview.net/forum?id=2VXyy9mIyU3}.

\bibitem[Deng et~al.(2009)Deng, Dong, Socher, Li, Li, and Fei-Fei]{imagenet}
Deng, J., Dong, W., Socher, R., Li, L.-J., Li, K., and Fei-Fei, L.
\newblock Imagenet: A large-scale hierarchical image database.
\newblock In \emph{2009 IEEE conference on computer vision and pattern
  recognition}, pp.\  248--255. Ieee, 2009.

\bibitem[Dosovitskiy \& Brox(2016)Dosovitskiy and Brox]{vaeblurry}
Dosovitskiy, A. and Brox, T.
\newblock Generating images with perceptual similarity metrics based on deep
  networks.
\newblock \emph{Advances in neural information processing systems}, 29, 2016.

\bibitem[Dosovitskiy et~al.(2020)Dosovitskiy, Beyer, Kolesnikov, Weissenborn,
  Zhai, Unterthiner, Dehghani, Minderer, Heigold, Gelly, et~al.]{vit}
Dosovitskiy, A., Beyer, L., Kolesnikov, A., Weissenborn, D., Zhai, X.,
  Unterthiner, T., Dehghani, M., Minderer, M., Heigold, G., Gelly, S., et~al.
\newblock An image is worth 16x16 words: Transformers for image recognition at
  scale.
\newblock \emph{arXiv preprint arXiv:2010.11929}, 2020.

\bibitem[Ghosh et~al.(2017)Ghosh, Kumar, and Sastry]{mae}
Ghosh, A., Kumar, H., and Sastry, P.
\newblock Robust loss functions under label noise for deep neural networks.
\newblock In \emph{Proceedings of the AAAI Conference on Artificial
  Intelligence}, volume~31, 2017.

\bibitem[Han et~al.(2018{\natexlab{a}})Han, Yao, Niu, Zhou, Tsang, Zhang, and
  Sugiyama]{masking}
Han, B., Yao, J., Niu, G., Zhou, M., Tsang, I., Zhang, Y., and Sugiyama, M.
\newblock Masking: A new perspective of noisy supervision.
\newblock \emph{Advances in neural information processing systems}, 31,
  2018{\natexlab{a}}.

\bibitem[Han et~al.(2018{\natexlab{b}})Han, Yao, Yu, Niu, Xu, Hu, Tsang, and
  Sugiyama]{coteaching}
Han, B., Yao, Q., Yu, X., Niu, G., Xu, M., Hu, W., Tsang, I.~W., and Sugiyama,
  M.
\newblock Co-teaching: Robust training of deep neural networks with extremely
  noisy labels.
\newblock In \emph{NeurIPS}, 2018{\natexlab{b}}.

\bibitem[Han et~al.(2020)Han, Niu, Yu, Yao, Xu, Tsang, and Sugiyama]{sigua}
Han, B., Niu, G., Yu, X., Yao, Q., Xu, M., Tsang, I., and Sugiyama, M.
\newblock Sigua: Forgetting may make learning with noisy labels more robust.
\newblock In \emph{International Conference on Machine Learning}, pp.\
  4006--4016. PMLR, 2020.

\bibitem[Joo et~al.(2020)Joo, Lee, Park, and Moon]{dirichletvae}
Joo, W., Lee, W., Park, S., and Moon, I.
\newblock Dirichlet variational autoencoder.
\newblock \emph{Pattern Recognit.}, 107:\penalty0 107514, 2020.
\newblock \doi{10.1016/j.patcog.2020.107514}.
\newblock URL \url{https://doi.org/10.1016/j.patcog.2020.107514}.

\bibitem[Kim et~al.(2021)Kim, Ko, Cho, Choi, and Yun]{fine}
Kim, T., Ko, J., Cho, s., Choi, J., and Yun, S.-Y.
\newblock Fine samples for learning with noisy labels.
\newblock In Ranzato, M., Beygelzimer, A., Dauphin, Y., Liang, P., and Vaughan,
  J.~W. (eds.), \emph{Advances in Neural Information Processing Systems},
  volume~34, pp.\  24137--24149. Curran Associates, Inc., 2021.
\newblock URL
  \url{https://proceedings.neurips.cc/paper/2021/file/ca91c5464e73d3066825362c3093a45f-Paper.pdf}.

\bibitem[Kingma \& Welling(2013)Kingma and Welling]{vae}
Kingma, D.~P. and Welling, M.
\newblock Auto-encoding variational bayes.
\newblock \emph{arXiv preprint arXiv:1312.6114}, 2013.

\bibitem[Krizhevsky et~al.(2009)Krizhevsky, Hinton, et~al.]{cifar10}
Krizhevsky, A., Hinton, G., et~al.
\newblock Learning multiple layers of features from tiny images.
\newblock 2009.

\bibitem[LeCun(1998)]{}
LeCun, Y.
\newblock The mnist database of handwritten digits.
\newblock \emph{http://yann. lecun. com/exdb/mnist/}, 1998.

\bibitem[Lee et~al.(2018)Lee, Lee, Lee, and Shin]{mahalanobis}
Lee, K., Lee, K., Lee, H., and Shin, J.
\newblock A simple unified framework for detecting out-of-distribution samples
  and adversarial attacks.
\newblock \emph{Advances in neural information processing systems}, 31, 2018.

\bibitem[Lee et~al.(2019)Lee, Yun, Lee, Lee, Li, and Shin]{rog}
Lee, K., Yun, S., Lee, K., Lee, H., Li, B., and Shin, J.
\newblock Robust inference via generative classifiers for handling noisy
  labels.
\newblock In \emph{International Conference on Machine Learning}, pp.\
  3763--3772. PMLR, 2019.

\bibitem[Li et~al.(2020)Li, Socher, and Hoi]{dividemix}
Li, J., Socher, R., and Hoi, S. C.~H.
\newblock Dividemix: Learning with noisy labels as semi-supervised learning.
\newblock In \emph{8th International Conference on Learning Representations,
  {ICLR} 2020, Addis Ababa, Ethiopia, April 26-30, 2020}. OpenReview.net, 2020.
\newblock URL \url{https://openreview.net/forum?id=HJgExaVtwr}.

\bibitem[Liu et~al.(2020)Liu, Niles-Weed, Razavian, and Fernandez-Granda]{elr}
Liu, S., Niles-Weed, J., Razavian, N., and Fernandez-Granda, C.
\newblock Early-learning regularization prevents memorization of noisy labels.
\newblock \emph{Advances in Neural Information Processing Systems}, 33, 2020.

\bibitem[Lukasik et~al.(2020)Lukasik, Bhojanapalli, Menon, and Kumar]{ls}
Lukasik, M., Bhojanapalli, S., Menon, A., and Kumar, S.
\newblock Does label smoothing mitigate label noise?
\newblock In \emph{International Conference on Machine Learning}, pp.\
  6448--6458. PMLR, 2020.

\bibitem[Ma et~al.(2020)Ma, Huang, Wang, Romano, Erfani, and Bailey]{apl}
Ma, X., Huang, H., Wang, Y., Romano, S., Erfani, S., and Bailey, J.
\newblock Normalized loss functions for deep learning with noisy labels.
\newblock In \emph{International Conference on Machine Learning}, pp.\
  6543--6553. PMLR, 2020.

\bibitem[Malach \& Shalev{-}Shwartz(2017)Malach and
  Shalev{-}Shwartz]{decoupling}
Malach, E. and Shalev{-}Shwartz, S.
\newblock Decoupling "when to update" from "how to update".
\newblock In Guyon, I., von Luxburg, U., Bengio, S., Wallach, H.~M., Fergus,
  R., Vishwanathan, S. V.~N., and Garnett, R. (eds.), \emph{Advances in Neural
  Information Processing Systems 30: Annual Conference on Neural Information
  Processing Systems 2017, December 4-9, 2017, Long Beach, CA, {USA}}, pp.\
  960--970, 2017.
\newblock URL
  \url{https://proceedings.neurips.cc/paper/2017/hash/58d4d1e7b1e97b258c9ed0b37e02d087-Abstract.html}.

\bibitem[Patrini et~al.(2017)Patrini, Rozza, Krishna~Menon, Nock, and
  Qu]{forward}
Patrini, G., Rozza, A., Krishna~Menon, A., Nock, R., and Qu, L.
\newblock Making deep neural networks robust to label noise: A loss correction
  approach.
\newblock In \emph{Proceedings of the IEEE conference on computer vision and
  pattern recognition}, pp.\  1944--1952, 2017.

\bibitem[Selvaraju et~al.(2017)Selvaraju, Cogswell, Das, Vedantam, Parikh, and
  Batra]{gradcam}
Selvaraju, R.~R., Cogswell, M., Das, A., Vedantam, R., Parikh, D., and Batra,
  D.
\newblock Grad-cam: Visual explanations from deep networks via gradient-based
  localization.
\newblock In \emph{Proceedings of the IEEE international conference on computer
  vision}, pp.\  618--626, 2017.

\bibitem[Song et~al.(2020)Song, Kim, Park, Shin, and Lee]{survey_noisy}
Song, H., Kim, M., Park, D., Shin, Y., and Lee, J.-G.
\newblock Learning from noisy labels with deep neural networks: A survey.
\newblock \emph{arXiv preprint arXiv:2007.08199}, 2020.

\bibitem[Tanaka et~al.(2018)Tanaka, Ikami, Yamasaki, and Aizawa]{joint}
Tanaka, D., Ikami, D., Yamasaki, T., and Aizawa, K.
\newblock Joint optimization framework for learning with noisy labels.
\newblock In \emph{Proceedings of the IEEE Conference on Computer Vision and
  Pattern Recognition}, pp.\  5552--5560, 2018.

\bibitem[Wang et~al.(2021)Wang, Hua, Kodirov, Clifton, and
  Robertson]{proselflc}
Wang, X., Hua, Y., Kodirov, E., Clifton, D.~A., and Robertson, N.~M.
\newblock Proselflc: Progressive self label correction for training robust deep
  neural networks.
\newblock In \emph{Proceedings of the IEEE/CVF Conference on Computer Vision
  and Pattern Recognition}, pp.\  752--761, 2021.

\bibitem[Wang et~al.(2018)Wang, Jha, and Chaudhuri]{knn1}
Wang, Y., Jha, S., and Chaudhuri, K.
\newblock Analyzing the robustness of nearest neighbors to adversarial
  examples.
\newblock In \emph{International Conference on Machine Learning}, pp.\
  5133--5142. PMLR, 2018.

\bibitem[Wang et~al.(2019)Wang, Ma, Chen, Luo, Yi, and Bailey]{sce}
Wang, Y., Ma, X., Chen, Z., Luo, Y., Yi, J., and Bailey, J.
\newblock Symmetric cross entropy for robust learning with noisy labels.
\newblock In \emph{Proceedings of the IEEE/CVF International Conference on
  Computer Vision}, pp.\  322--330, 2019.

\bibitem[Wei et~al.(2020)Wei, Feng, Chen, and An]{jocor}
Wei, H., Feng, L., Chen, X., and An, B.
\newblock Combating noisy labels by agreement: A joint training method with
  co-regularization.
\newblock In \emph{Proceedings of the IEEE/CVF Conference on Computer Vision
  and Pattern Recognition}, pp.\  13726--13735, 2020.

\bibitem[Welinder et~al.(2010)Welinder, Branson, Perona, and Belongie]{crowd}
Welinder, P., Branson, S., Perona, P., and Belongie, S.
\newblock The multidimensional wisdom of crowds.
\newblock In Lafferty, J., Williams, C., Shawe-Taylor, J., Zemel, R., and
  Culotta, A. (eds.), \emph{Advances in Neural Information Processing Systems},
  volume~23. Curran Associates, Inc., 2010.
\newblock URL
  \url{https://proceedings.neurips.cc/paper/2010/file/0f9cafd014db7a619ddb4276af0d692c-Paper.pdf}.

\bibitem[Xia et~al.(2020{\natexlab{a}})Xia, Liu, Han, Gong, Wang, Ge, and
  Chang]{rel}
Xia, X., Liu, T., Han, B., Gong, C., Wang, N., Ge, Z., and Chang, Y.
\newblock Robust early-learning: Hindering the memorization of noisy labels.
\newblock In \emph{International Conference on Learning Representations},
  2020{\natexlab{a}}.

\bibitem[Xia et~al.(2020{\natexlab{b}})Xia, Liu, Han, Wang, Gong, Liu, Niu,
  Tao, and Sugiyama]{pdn}
Xia, X., Liu, T., Han, B., Wang, N., Gong, M., Liu, H., Niu, G., Tao, D., and
  Sugiyama, M.
\newblock Part-dependent label noise: Towards instance-dependent label noise.
\newblock \emph{Advances in Neural Information Processing Systems}, 33,
  2020{\natexlab{b}}.

\bibitem[Xiao et~al.(2017)Xiao, Rasul, and Vollgraf]{}
Xiao, H., Rasul, K., and Vollgraf, R.
\newblock Fashion-mnist: a novel image dataset for benchmarking machine
  learning algorithms.
\newblock \emph{arXiv preprint arXiv:1708.07747}, 2017.

\bibitem[Xiao et~al.(2015)Xiao, Xia, Yang, Huang, and Wang]{clothing1m}
Xiao, T., Xia, T., Yang, Y., Huang, C., and Wang, X.
\newblock Learning from massive noisy labeled data for image classification.
\newblock In \emph{Proceedings of the IEEE conference on computer vision and
  pattern recognition}, pp.\  2691--2699, 2015.

\bibitem[Yao et~al.(2020)Yao, Liu, Han, Gong, Deng, Niu, and Sugiyama]{dualT}
Yao, Y., Liu, T., Han, B., Gong, M., Deng, J., Niu, G., and Sugiyama, M.
\newblock Dual t: Reducing estimation error for transition matrix in
  label-noise learning.
\newblock In Larochelle, H., Ranzato, M., Hadsell, R., Balcan, M.~F., and Lin,
  H. (eds.), \emph{Advances in Neural Information Processing Systems},
  volume~33, pp.\  7260--7271. Curran Associates, Inc., 2020.
\newblock URL
  \url{https://proceedings.neurips.cc/paper/2020/file/512c5cad6c37edb98ae91c8a76c3a291-Paper.pdf}.

\bibitem[Yao et~al.(2021)Yao, Liu, Gong, Han, Niu, and Zhang]{causalNL}
Yao, Y., Liu, T., Gong, M., Han, B., Niu, G., and Zhang, K.
\newblock Instance-dependent label-noise learning under a structural causal
  model.
\newblock \emph{Advances in Neural Information Processing Systems}, 34, 2021.

\bibitem[Yi \& Wu(2019)Yi and Wu]{pencil}
Yi, K. and Wu, J.
\newblock Probabilistic end-to-end noise correction for learning with noisy
  labels.
\newblock In \emph{Proceedings of the IEEE/CVF Conference on Computer Vision
  and Pattern Recognition}, pp.\  7017--7025, 2019.

\bibitem[Yu et~al.(2019)Yu, Han, Yao, Niu, Tsang, and Sugiyama]{coteachingplus}
Yu, X., Han, B., Yao, J., Niu, G., Tsang, I., and Sugiyama, M.
\newblock How does disagreement help generalization against label corruption?
\newblock In \emph{International Conference on Machine Learning}, pp.\
  7164--7173. PMLR, 2019.

\bibitem[Zhang et~al.(2021{\natexlab{a}})Zhang, Bengio, Hardt, Recht, and
  Vinyals]{zhang}
Zhang, C., Bengio, S., Hardt, M., Recht, B., and Vinyals, O.
\newblock Understanding deep learning (still) requires rethinking
  generalization.
\newblock \emph{Communications of the ACM}, 64\penalty0 (3):\penalty0 107--115,
  2021{\natexlab{a}}.

\bibitem[Zhang et~al.(2021{\natexlab{b}})Zhang, Niu, and Sugiyama]{tvr}
Zhang, Y., Niu, G., and Sugiyama, M.
\newblock Learning noise transition matrix from only noisy labels via total
  variation regularization.
\newblock In Meila, M. and Zhang, T. (eds.), \emph{Proceedings of the 38th
  International Conference on Machine Learning}, volume 139 of
  \emph{Proceedings of Machine Learning Research}, pp.\  12501--12512. PMLR,
  18--24 Jul 2021{\natexlab{b}}.
\newblock URL \url{https://proceedings.mlr.press/v139/zhang21n.html}.

\bibitem[Zhang \& Sabuncu(2018)Zhang and Sabuncu]{gce}
Zhang, Z. and Sabuncu, M.~R.
\newblock Generalized cross entropy loss for training deep neural networks with
  noisy labels.
\newblock In \emph{32nd Conference on Neural Information Processing Systems
  (NeurIPS)}, 2018.

\bibitem[Zheng et~al.(2021)Zheng, Awadallah, and Dumais]{meta}
Zheng, G., Awadallah, A.~H., and Dumais, S.
\newblock Meta label correction for noisy label learning.
\newblock \emph{AAAI 2021}, 2021.

\bibitem[Zheng et~al.(2020)Zheng, Wu, Goswami, Goswami, Metaxas, and Chen]{lrt}
Zheng, S., Wu, P., Goswami, A., Goswami, M., Metaxas, D., and Chen, C.
\newblock Error-bounded correction of noisy labels.
\newblock In \emph{International Conference on Machine Learning}, pp.\
  11447--11457. PMLR, 2020.

\end{thebibliography}
\bibliographystyle{icml2022}

\newpage
\appendix
\onecolumn

\section{Previous Researches}
Here, we explain the details on the previous researches for learning with noisy labels which were not included in the main paper.

\subsection{Previous Researches for Learning With Noisy Labels}
\label{appen: prev noisy}
\subsubsection{Extraction of Reliable Clean Samples}
\cite{coteaching} is one of the studies which assumed that samples with small losses are to be clean. The problem of small loss criteria is that, however, the deep neural network will overfit to small loss samples in earlier learning stage, and it will cause learning bias. To prevent this problem, \cite{coteaching} use two identical structured networks with different initial point. Then it chooses small loss samples from each network and exchanges them with peer network for updating the parameters. With the increase of training epochs, however, two networks have converged to a consensus gradually. Therefore, \cite{coteachingplus} has been proposed. In this study, a sample can only be selected when its output from two different networks disagree and it has small loss. Although it solves the problem of two different networks' convergence to a same point, it selects very few examples to train classifiers, especially when the noise ratio is high. This phenomenon is reported in \cite{jocor}, and they relieve these limitations by updating two networks together, making the result of two networks become closer to true labels and peer network's.

\subsubsection{Label Modification}
\cite{joint} jointly optimizes both model parameter and label data, initialize network parameters and train with modified labels again. \cite{pencil} adopts label probability distributions to supervise network learning and to update these distributions through back-propagation end-to-end in each epoch. \cite{dividemix} removes labels of samples with large loss, considering the samples as unlabeled, and applying semi-supervised learning strategies. \cite{lrt} takes the likelihood ratio between the classifier’s confidence on noisy label and its confidence on its own label prediction as its threshold to configure clean labeled dataset, and corrects the label into the prediction for samples with low likelihood ratio iteratively. \cite{meta} uses meta learning and reweights samples depending on the cleanness of its label. \cite{proselflc} analyzes several types of label modification approaches and suggests label correction regularization hyperparameter depending both on learning time stage and confidence of a sample.

\subsubsection{Robust Loss and Regularization} 
\cite{mae} propose that the mean absolute loss (MAE) is more robust to the noisy label. However, it can cause underfitting, meaning a classifier converge to a bad sub-optimal. \cite{gce} combine the advantages of the mean absolute loss (MAE) and the cross entropy loss (CE) to obtain a better loss function and presents a theoretical analysis of the proposed loss functions in the context of noisy labels. \cite{sce} analyze CE, propose that CE fail to learn all classes uniformly well when learning with noisy labels. They suggest that adding reverse cross entropy term to original cross entropy term makes more robust loss to noisy label.

\cite{elr} is based on the phenomenon that the deep neural networks trained with noisy labels make progress during the early learning stage before memorization occurs. Therefore, they added a regularization term that seeks to maximize the inner product between the model output and the targets, with the target same as the weighted results of previous epochs. \cite{rel} distinguishes critical parameters from non-critical parameters by gradient value and use only critical parameters for fitting true labels, hoping to solve overparameterization problem. 

\subsection{Explanation on Recent Instance-dependent Transition Matrix Studies}
\label{appen: prev expla}
\subsubsection{PDN}
\cite{pdn} approximates $T(x)$ by a weighted combination of \textit{part-dependent} transition matrices, $\{P^{k}\}^{r}_{k=1}$. This estimation is elaborated as follows:
\begin{equation}
T_{i,j}(x) = \sum^{r}_{k=1}h_{k}(x)P_{i,j}^{k}
\label{eq:part_dependent}
\end{equation}
Here, $h_{k}(x)$ is a input-dependent scalar weight for $P^k$. However, this estimation requires another estimation of $P^k$ relying on \textit{anchor point}s, whose discrete selection becomes heuristic, i.e. selecting an instance with a confident output from noisy classifier $\tilde{f}$.

\subsubsection{CSIDN}
\cite{csidn} introduces a new method based on \textit{confidence score}, $p(y=i|\Tilde{y}=i,x)$, and this method assumes that the confidence score is given. With observed $p(y=i|\Tilde{y}=i,x)$, $T$ is transformed as Eq. \ref{eq:CSIDN}.
\begin{equation}
\begin{aligned}
T_{i,j}(x)& = \left\{\begin{matrix}
P(y=i|\Tilde{y}=i,x)\frac{P(\Tilde{y}=i|x)}{P(y=i|x)} & \; if \; j=i\\ 
P(\Tilde{y}=j|\Tilde{y}\neq i, y=i)(1-T_{i,i}(x)) & \; if \; j \neq i
\end{matrix}\right.
\end{aligned}
\label{eq:CSIDN}
\end{equation}
Whereas this method provides an input-dependent $T$, an assumption of observing $p(y=i|\Tilde{y}=i,x)$ may be unrealistic and potentially annotation intensive. 
\section{Reparameterization Trick for Dirichlet Distribution}
\label{appen: reparameterization}
Following \cite{dirichletvae}, we consider the dirichlet distribution as a composition of gamma random variables as follows:
\begin{equation}
    Dirichlet(x;\alpha) = \frac{\Gamma(\sum\alpha_{k})}{\prod\Gamma(\alpha_{k})}\prod x_{k}^{\alpha_{k}-1}, \; Gamma(x;\alpha, \beta) = \frac{\beta^{\alpha}}{\Gamma(\alpha)}x^{\alpha-1}e^{-\beta x}
\label{eq:dirigamma}
\end{equation}
where $\alpha_{k}, \alpha, \beta > 0$. In other words, if there are $K$ independent random variables $X_k \sim \text{Gamma} (\alpha_k , \beta)$ with $\alpha_{k}, \beta > 0$ for $k = 1, \cdots, K$, we have $Y \sim \text{Dirichlet} (\alpha)$ where $Y_k = X_k / \sum X_i$. Note that $\beta$ should be same for all $X_k$. In this way, we can calculate the kl divergence of two dirichlet distributions as kl divergence of two multi-gamma distributions, where $X = (X_1, \cdots, X_K) \sim \text{MultiGamma}(\alpha, \beta \cdot \mathbf{1}_K)$ represents vector of $K$ independent gamma random variables $X_k \sim \text{Gamma} (\alpha_k , \beta)$ with $\alpha_k , \beta > 0$ for $k = 1, \cdots, K$.

Note that the derivative of a Gamma-like function $\frac{\Gamma(\alpha)}{\beta^\alpha}$ can be derived as follows:

\begin{equation}
\frac{d}{d\alpha} \frac{\Gamma (\alpha)}{\beta^{\alpha}} = \beta^{-\alpha} (\Gamma ^\prime (\alpha) - \Gamma (\alpha) \log \beta) = \int_0 ^{\infty} x^{\alpha -1} e^{- \beta x} \log x ~ dx ~ .
\label{eq:gamma derivative}
\end{equation}

Using Eq. \ref{eq:gamma derivative}, the KL divergence can be written as follows:

\begin{equation}
\begin{aligned}
~ & \text{KL}(Q || P) = 
 \int_{\mathcal{D}} q(\mathbf{x}) \log \frac{q(\mathbf{x})}{p(\mathbf{x})} ~ d\mathbf{x} = \int_0 ^{\infty} \cdots \int_0 ^{\infty} \prod \text{Gamma} (\hat{\alpha}_k ,\beta) \log \frac {\beta ^{\sum \hat{\alpha}_k \prod \Gamma^{-1} (\hat{\alpha}_k) e^{-\beta \sum x_k} \prod x_k ^{\hat{\alpha}_k -1}}}{\beta ^{\sum {\alpha}_k \prod \Gamma^{-1} ({\alpha}_k) e^{-\beta \sum x_k} \prod x_k ^{{\alpha}_k -1}}} ~ d\mathbf{x} \\
= & \int_0 ^{\infty} \cdots \int_0 ^{\infty} \prod \text{Gamma} (\hat{\alpha}_k ,\beta) \times \Big[ \sum ( \hat{\alpha}_k - \alpha_k ) \log \beta + \sum \log \Gamma (\alpha_k) - \sum \log \Gamma (\hat{\alpha}_k) + \sum (\hat{\alpha}_k - \alpha_k ) \log x_k \Big] ~ d\mathbf{x} \\
= & \sum ( \hat{\alpha}_k - \alpha_k ) \log \beta + \sum \log \Gamma (\alpha_k) - \sum \log \Gamma (\hat{\alpha}_k) + \int_0 ^{\infty} \cdots \int_0 ^{\infty} \frac{\beta ^{\hat{\alpha}_k}}{\prod \Gamma (\hat{\alpha}_k)} e^{-\beta \sum x_k} \prod x_k ^{\hat{\alpha}_k -1} \big( \sum (\hat{\alpha}_k - \alpha_k ) \log x_k \big) ~ d\mathbf{x} \\
= &  \sum ( \hat{\alpha}_k - \alpha_k ) \log \beta + \sum \log \Gamma (\alpha_k) - \sum \log \Gamma (\hat{\alpha}_k) + \sum (\hat{\alpha}_k - \alpha_k ) \beta^{\hat{\alpha}_k} \Gamma^{-1} (\hat{\alpha}_k) \beta^{-\hat{\alpha}_k} \big( \Gamma^\prime (\hat{\alpha}_k) - \Gamma (\hat{\alpha}_k) \log \beta \big) \\
= & \sum ( \hat{\alpha}_k - \alpha_k ) \log \beta + \sum \log \Gamma (\alpha_k) - \sum \log \Gamma (\hat{\alpha}_k) + \sum (\hat{\alpha}_k - \alpha_k)(\psi (\hat{\alpha}_k) - \log \beta) \\
= & ~ \sum \log \Gamma (\alpha_k) - \sum \log \Gamma (\hat{\alpha}_k) + \sum (\hat{\alpha}_k - \alpha_k) \psi (\hat{\alpha}_k) 
\end{aligned}
\end{equation}

Now we can calculate the kl divergence of two multi-gamma distributions as follows:

\begin{equation}
\text{KL} (Q||P) = \sum \log \Gamma (\alpha_k) - \sum \log \Gamma (\hat{\alpha}_k) + \sum (\hat{\alpha}_k - \alpha_k) \psi (\hat{\alpha}_k) ~,
\end{equation}

where $P = \text{MultiGamma} (\alpha, \beta \cdot \mathbf{1}_K)$ and $Q = \text{MultiGamma} (\hat{\alpha}, \beta \cdot \mathbf{1}_K)$.

\newpage
\section{Alignment of $T$ and $H$}

\subsection{Proof of Proposition \ref{pro:transition relation}}
\label{appen: proposition proof}
\begin{proposition}
Assume that $\widehat{y}$ and $y$ are conditional independent given $\widetilde{y}$, and $p(\widehat{y}=k|x) \neq 0$ for all $k=1,..,c$. Then, $H_{kj}(x) = \frac{p(y=j|x)}{p(\widehat{y}=k|x)}\sum_{i}p(\widehat{y}=k|\widetilde{y}=i,x)T_{ij}(x)$ for all $j,k=1,...,c$.
\end{proposition}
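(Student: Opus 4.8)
The plan is to start from the definition $H_{kj}(x) = p(y=j \mid \hat{y}=k, x)$ and rewrite it via the definition of conditional probability as $H_{kj}(x) = p(y=j,\hat{y}=k \mid x)\,/\,p(\hat{y}=k \mid x)$. This split is legitimate precisely because the hypothesis $p(\hat{y}=k\mid x)\neq 0$ guarantees a nonzero denominator. The entire argument is then carried out conditionally on $x$, so every probability below is understood to carry $x$ in its conditioning.

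The core step is to expand the joint $p(y=j,\hat{y}=k\mid x)$ by marginalizing over the intermediate noisy label $\tilde{y}$:
\begin{equation}
p(y=j,\hat{y}=k\mid x) = \sum_i p(y=j,\hat{y}=k,\tilde{y}=i\mid x).
\end{equation}
Here I would invoke the conditional independence assumption $\hat{y}\indep y\mid \tilde{y}$ (read conditionally on $x$, i.e. $\hat{y}\indep y \mid \tilde{y}, x$), which is exactly what lets me split each summand into a product. Applying the chain rule and then the assumption to collapse $p(\hat{y}=k\mid y=j,\tilde{y}=i,x)$ down to $p(\hat{y}=k\mid \tilde{y}=i,x)$ gives $p(y=j,\hat{y}=k,\tilde{y}=i\mid x) = p(y=j\mid\tilde{y}=i,x)\,p(\hat{y}=k\mid\tilde{y}=i,x)\,p(\tilde{y}=i\mid x)$, where conditional independence is what removes the dependence of $\hat{y}$ on $y$ inside the conditional.

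Next I would convert the factor $p(y=j\mid\tilde{y}=i,x)\,p(\tilde{y}=i\mid x)$ back into the transition matrix. Recognizing it as the joint $p(y=j,\tilde{y}=i\mid x)$ and refactoring in the other order via Bayes' rule yields $p(\tilde{y}=i\mid y=j,x)\,p(y=j\mid x)$, and by the definition of the transition matrix the first factor is the relevant entry of $T$. Substituting back, the factor $p(y=j\mid x)$ is independent of the summation index and pulls out of the sum, so that $p(y=j,\hat{y}=k\mid x) = p(y=j\mid x)\sum_i p(\hat{y}=k\mid \tilde{y}=i,x)\,p(\tilde{y}=i\mid y=j,x)$; dividing through by $p(\hat{y}=k\mid x)$ then produces the claimed identity.

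The main obstacle is conceptual rather than computational: correctly applying the conditional independence so that $p(\hat{y}=k\mid y=j,\tilde{y}=i,x)$ legitimately collapses to $p(\hat{y}=k\mid \tilde{y}=i,x)$, and keeping the conditioning on $x$ implicit but consistent throughout. A secondary point of care is the index bookkeeping of $T$: one must ensure the transition entry that appears, $p(\tilde{y}=i\mid y=j,x)$, is matched so that the summation index $i$ labels $\tilde{y}$ consistently in both surviving factors. Once these are pinned down, the remaining manipulations are just Bayes' rule and pulling a constant out of a finite sum.
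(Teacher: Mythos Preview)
Your proposal is correct and follows essentially the same route as the paper: marginalize over $\tilde{y}$, apply the chain rule to the joint $p(y=j,\hat{y}=k,\tilde{y}=i\mid x)$, use the conditional independence $\hat{y}\indep y\mid \tilde{y},x$ to drop $y$ from the conditioning of $\hat{y}$, and pull $p(y=j\mid x)$ out of the sum. The only cosmetic difference is that the paper's chain rule factors out $p(y=j\mid x)$ directly (yielding $p(\tilde{y}=i\mid y=j,x)$ immediately), whereas you first factor out $p(\tilde{y}=i\mid x)$ and then reconvert via Bayes; both arrive at the same expression.
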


\begin{proof}
\begin{align}
p(y=j|\Hat{y}=k,x)&=\sum_{i}p(y=j,\Tilde{y}=i|\Hat{y}=k,x)\\
&=\sum_{i}\frac{p(y=j,\Tilde{y}=i,\Hat{y}=k|x)}{p(\Hat{y}=k|x)}\\
&=\sum_{i}\frac{p(\Hat{y}=k|\Tilde{y}=i,y=j,x)p(\Tilde{y}=i|y=j,x)p(y=j|x)}{p(\Hat{y}=k|x)}\\
&=\frac{p(y=j|x)}{p(\Tilde{y}=i|x)} \sum_{i} p(\Hat{y}=k|\Tilde{y}=i,x)p(\Tilde{y}=i|y=j,x) \quad (\because \widehat{y} \bigCI y | \widetilde{y}) \\
\end{align}
Therefore, $H_{kj}(x) = \frac{p(y=j|x)}{p(\widehat{y}=k|x)}\sum_{i}p(\widehat{y}=k|\widetilde{y}=i,x)T_{ij}(x)$ for all $j,k=1,...,c.$ \\
\end{proof}

The assumption of $\hat{y} \indep y|\tilde{y}$ is natural because $\hat{y}$ is conditionally independent to $y$ when $\hat{\psi}$ is trained only with $\tilde{y}$. Proposition \ref{pro:transition relation} shows that $H$ can be formulated by $T$ with weighting variables, which are observable or can be easily computed from our framework. It implies that we can also infer $T$ from the inference procedure of $H$, which reduces the framework of NPC to the transition-based approaches.

\subsection{Experiment on the Relation between $T$ and $H$}
\label{appen: t and h}
\subsubsection{Experiment Details}
For $p(\hat{y}|x)$, we use same network as in Section \ref{sec:exper}. For finding $p(y|x)$, we use our model output, which is $\sum_{k=1}^c p(y|\hat{y}=k,x)p(\hat{y}=k|x)$. What is left is then $p(\hat{y}|\Tilde{y},x)$ term. Since all variables for the probability calculation is given as datasets, we can make a function that approximates this probability with another deep neural network. This network has similar structure with the networks used in Section \ref{sec:exper}. The only difference is $\Tilde{y}$ is to be concatenated with feature of $x$. For other implementation details, such as batch size, learning rate, we utilize same condition as in Section \ref{sec:exper}.

In the concept of instance-dependent noisy label generation model, each transition matrix of all different samples is assumed not to be identical. As a method to merge instance-wise different transition matrices as a single representative value, we approximate the class-wise transition matrix with monte-carlo estimation, which is calculated as Eq. \ref{eq:mc}.
\begin{equation}
\begin{aligned}
T_{ij} &= P(\tilde{y}=j|y=i)\\
&= \sum_{x} P(\tilde{y}=j,x|y=i)\\
&= \sum_{x} P(\tilde{y}=j|y=i,x)P(x|y=i)\\
&\approx \frac{1}{N_{i}} \sum_{x} P(\tilde{y}=j|y=i,x),\ where \ N_{i} = total\ number\ of\ samples\ of\ class\ i\\
\end{aligned}
\label{eq:mc}
\end{equation}

\subsubsection{Additional Results on Transition Matrix Estimation}

We report MSE between the transition matrix approximated from each algorithm and the true transition matrix in Table \ref{tab: transition matrix mse} for CIFAR-10 dataset with various noisy label conditions. The result shows that NPC approximates the transition matrix as good as the traditional algorithms.

\begin{table}[h]
\centering
\begin{tabular}{c|c|ccccc}
\toprule
Noise & Ratio   & Forward & DualT  & TVR    & CausalNL & NPC    \\ \midrule
\multirow{2}{*}{SN}  & 20 & 0.0018  & 0.0012 & 0.0018 & 0.0035   & 0.0015 \\ \cmidrule{2-7}
& 80 & 0.0004  & 0.0576 & 0.0005 & 0.0195   & 0.0005 \\ \midrule
\multirow{2}{*}{IDN} & 20 & 0.0038  & 0.0021 & 0.0024 & 0.0034   & 0.0035 \\ \cmidrule{2-7}
 & 40 & 0.0041  & 0.0043 & 0.0032 & 0.0049   & 0.0018 \\
 \bottomrule
\end{tabular}
\caption{MSE between the approximated transition matrix and the true one.}
\label{tab: transition matrix mse}
\end{table}

\section{Experiment}
Here, we manage dataset explanation, noise label generation process, experiment settings and additional result.

\subsection{Implementation Details and Baseline Description}

\subsubsection{Synthetic Noisy Label Generation Process}
\label{appen : noisy label generation}

\textit{MNIST} \cite{mnist} and \textit{Fashion-MNIST} \cite{fmnist} are both $28\times28$ grayscale image datasets with 10 classes, which include 60,000 training samples and 10,000 test samples. \textit{CIFAR-10} \cite{cifar10} is $32\times32\times3$ color image dataset with 10 classes, which includes 50,000 training samples and 10,000 test samples. Since these datasets are assumed to have no noisy labels, we manage four types of noisy label for noisy label injection. Here, we explain details of those noisy label generation processes. Since we explain enough for symmetric noise in Section \ref{sec:exper}, we pass explanation on it.

\textbf{Asymmetric Noise (ASN)} \cite{coteaching, joint, rel}
For this type of noise, we flipped label class as below, following the previous researches.

\begin{itemize}
    \item MNIST : $2 \Rightarrow 7, 3 \Rightarrow 8, 5 \Leftrightarrow 6$
    \item FMNIST : $T-shirt \Rightarrow Shirt, Pullover \Rightarrow Coat, Sandals \Rightarrow  Sneaker$
    \item CIFAR-10 : $Truck \Rightarrow Automobile, Bird \Rightarrow Airplane, Deer \Rightarrow Horse, Cat \Leftrightarrow Dog$ 
\end{itemize}

\textbf{Instance Dependent Noise (IDN)} We followed noise generation process as utilized at \cite{pdn,cores,causalNL}.
\begin{algorithm}
\caption{Instance Dependent Noise Generation Process}
\begin{algorithmic}[1]
	\REQUIRE Clean samples ${(x_i, y_i)_{i=1}^n}$;, Noise rate $\tau$ 
	\STATE Sample instance flip rates $q\in \mathbb{R}^n$ from the truncated normal distribution $\textit{N}(\tau, 0.1^2, [0,1])$;
	\STATE Independently sample $w_1, w_2,..., w_c$ from the standard normal distribution $\textit{N}(0, 1^2)$;
	\FOR{$i=1,2,...,n$}
		\STATE $p=x_i\times w_{y_i}$;
		\STATE $p_{y_i}=-\inf $;
		\STATE $p=q_i\times softmax(p)$;
		\STATE $p_{y_i}=1-q_i$;
		\STATE Randomly choose a label from the label space according to the possibilities $p$ as noisy label $\bar{y}_i$;
	\ENDFOR
\end{algorithmic}
$\textbf{Output:}$ Noisy samples ${(x_i, \bar{y}_i)_{i=1}^n}$
\label{alg: idn noise generation}
\end{algorithm}

\textbf{Similarity Reflected Instance Dependent Noise (SRIDN)} We followed noise generation process as utilized at \cite{aaai-idn, csidn}.

\begin{algorithm}
\caption{Similarity Reflected Instance Dependent Noise Generation Process}
\begin{algorithmic}[1]
	\REQUIRE Clean samples ${(x_i, y_i)_{i=1}^n}$;, Noise rate $\tau$ 
	\STATE Train a classifier;
	\STATE Get output from a classifier $f_i \in \mathbb{R}^c$ for all $i=1,...,n$ and sort by $f_i^{y_i}$;
	\STATE Set $N_{noisy}=0$;
	\WHILE{$N_{noisy}<n\times \tau$}
		\STATE From the least confident sample, choose noisy label $\bar{y}_i=max_{j\neq y_i, j\in{1,...,c}}f_i^j$;
		\STATE $N_{noisy}=N_{noisy}+1$;
	\ENDWHILE
\end{algorithmic}
$\textbf{Output:}$ Noisy samples ${(x_i, \bar{y}_i)_{i=1}^n}$
\label{alg: sridn noise generation}
\end{algorithm}

We only perform normalization for CIFAR-10.

\subsubsection{Real Datasets with Noisy Labels}
\label{appen:real_data}
 \textit{Food101} \cite{food} is color image datasets with 101 food categories, each category having 1,000 samples each. For each class, 250 images are annotated by humans, consisting of test dataset and 750 images are provided with real-world label noise. \textit{Clothing-1M} \cite{clothing1m} is another real-world noisy label dataset collected from several online shopping websites. The data contains 1 million training images with 14 classes. We use the total 75,000 human annotated images only for testing. All datasets have been widely used in the previous researches \cite{jocor, rel, cores,causalNL}. For Food101 and Clothing1M, we resize the image to $256 \times 256$, crop the middle $224 \times 224$ as input, and perform normalization.

\subsubsection{Implementation Details}
\label{appen:experiment condition}
For fair comparison, we implement same network structures for all baseline methods. In details, we use network with two convolution layers for MNIST and Fashion-MNIST, 9-layered convolutional neural network for CIFAR-10 as in \cite{coteaching, coteachingplus}. For real-world dataset, Food101 and Clothing-1M, we use a ResNet-50 network pre-trained on ImageNet \cite{imagenet}.

\begin{table}[h]
    \centering
    \begin{tabular}{|c|c|}
    \toprule
    CNN on MNIST \& Fashion-MNIST & CNN on CIFAR-10 \\ \hline
    $28\times28$  & $32\times32\times3$         \\ \hline
     & $3\times3$ conv, 128LRELU  \\ 
     & $3\times3$ conv, 128LRELU  \\
     & $3\times3$ conv, 128LRELU  \\ 
     & $2\times2$ max-pool, stride 2 \\
     & dropout, $p=0.25$ \\ \cline{2-2} 
    $3\times3$ conv, 8RELU & $3\times3$ conv, 256LRELU  \\
    $3\times3$ conv, 16Tanh & $3\times3$ conv, 256LRELU  \\
    dense $16\times28\times28 \rightarrow 28\times28$ & $3\times3$ conv, 256LRELU  \\  
    dense $28\times28 \rightarrow 256$ & $2\times2$ max-pool, stride 2 \\
     & dropout, $p=0.25$ \\ \cline{2-2} 
     & $3\times3$ conv, 512LRELU  \\
     & $3\times3$ conv, 256LRELU  \\
     & $3\times3$ conv, 128LRELU  \\
     & avg-pool \\ \hline
    dense $256\rightarrow10$ & dense $128\rightarrow10$ \\ \bottomrule
    \end{tabular}
    \caption{Convolutional Neural Network Structure for MNIST, Fashion-MNIST and CIFAR-10 datasets.}
    \label{tab:network}
\end{table}

We train all synthetic datasets with batch size 128 and all real-world datasets with batch size 32. For all datasets, we use Adam optimizer with learning rate of $10^{-3}$ and no learning rate decay is applied. All methods are implemented by PyTorch.

\subsubsection{Baseline Description}
We compare the proposed method with the following approaches.
\label{baselines}
\begin{itemize}
    \item \textbf{CE} It trains the deep neural networks with the cross entropy loss on noisy datasets.
    \item \textbf{Joint} \cite{joint} It jointly optimizes the network parameters and the sample labels. We set $\alpha$ and $\beta$ as 1.0 and 0.5 respectively.
    \item \textbf{Coteaching} \cite{coteaching} It trains two different networks and the samples with small loss are only fed to the other network for learning. 
    \item \textbf{JoCoR} \cite{jocor} It also trains two networks and selects the samples, for which the sum of the losses from two networks is small, as clean samples. Following authors of the original paper, we set $\lambda$ as 0.5, and set other hyperparameter settings as cited on the paper.
    \item \textbf{CORES2} \cite{cores} Regarding the prediction output of a network, which is trained on noisy dataset, as a confidence of each instance, it selects data instances with high confidence as clean instances. We follow hyperparameter settings as cited on the paper.
    \item \textbf{SCE} \cite{sce} It trains network based on the specialized loss function, which is sum of cross entropy and reverse cross entropy. We follow hyperparameter settings as cited on the paper.
    \item \textbf{ES (EarlyStop)} Splitting a part of noisy training dataset as validation dataset, it only learns network until the validation performance continues to decrease.
    \item \textbf{LS (Label Smoothing)} \cite{ls} It trains network based on the noisy dataset where each label is expressed as a smoothed label by a given label smoothing factor rather than one-hot encoding. We set smoothing factor as 0.1.
    \item \textbf{REL} \cite{rel} To avoid the over-parameterization of the network, it only trains critical parameters, whose gradients are high, of network.
    \item \textbf{Forward} \cite{forward} It trains network based on the loss function, which is modified by the estimated transition probability matrix.
    \item \textbf{DualT} \cite{dualT} In order to reduce the estimation error of the transition probability matrix, the corresponding matrix is expressed as a product of two matrices, and each matrix is estimated.
    \item \textbf{TVR} \cite{tvr} To solve the problem of infinitely many possible solutions of the transition probability matrix, it minimizes the total variance distance. 
    \item \textbf{CausalNL} \cite{causalNL} We already explained on this research previously in Section \ref{sec:prev trans}.
\end{itemize}

\newpage
\subsection{Performance on MNIST Dataset}
\label{appen:MNIST result}
Due to space issue, we show test accuracy on MNIST dataset with several noise conditions here. In most noise settings, our proposed method increases model performance, calibrating noisy prediction to true label.

\begin{table}[h]
\centering
\resizebox{0.52\textwidth}{!}{%
\begin{tabular}{c ccccccccc}
\toprule

\multicolumn{1}{c}{\textbf{Model}}
 & \textbf{Clean} & \multicolumn{2}{c}{\textbf{SN}} & \multicolumn{2}{c}{\textbf{ASN}} & \multicolumn{2}{c}{\textbf{IDN}} & \multicolumn{2}{c}{\textbf{SRIDN}} \\
 \cmidrule(lr){2-2}\cmidrule(lr){3-4}\cmidrule(lr){5-6}\cmidrule(lr){7-8}\cmidrule(lr){9-10}
  & - & 20 & 80 & 20 & 40 & 20 & 40 & 20 & 40 \\ \midrule 
CE         & 97.8 & 86.3 & 29.7 & 89.2 & 81.3 & 80.5 & 66.3 & 82.9 & 65.4 \\
\rowcolor{Gray}w/NPC      & \textbf{98.2} & \textbf{95.3} & \textbf{36.6} & \textbf{96.3} & \textbf{92.3} & \textbf{94.6} & \textbf{89.1} & \textbf{85.2} & \textbf{70.3} \\\hline
Joint      & 93.0 & 93.9 & \textbf{22.0} & 93.8 & 93.2 & 93.1 & 93.6 & 87.1 & 81.1 \\
\rowcolor{Gray}w/NPC      & \textbf{94.5} & \textbf{95.5} & 21.9 & \textbf{95.2} & \textbf{94.9} & \textbf{94.8} & \textbf{95.5} & \textbf{89.1} & \textbf{84.4} \\\hline
Coteaching & 98.0 & 91.8 & 69.6 & 97.9 & 97.5 & 90.7 & 87.5 & 89.0 & 81.3 \\
\rowcolor{Gray}w/NPC      & \textbf{98.3} & \textbf{95.0} & \textbf{74.8} & \textbf{98.2} & \textbf{97.9} & \textbf{94.9} & \textbf{93.3} & \textbf{91.0} & \textbf{84.4} \\\hline
JoCoR      & 97.8 & 95.0 & \textbf{36.6} & \textbf{97.5} & \textbf{95.3} & 94.6 & 93.3 & 90.2 & 82.2 \\
\rowcolor{Gray}w/NPC      & \textbf{98.3} & \textbf{96.4} & 36.3 & 97.3 & 94.6 & \textbf{96.9} & \textbf{96.1} & \textbf{91.9} & \textbf{85.2} \\\hline
CORES2     & 97.0 & 75.3 & 16.6 & 88.5 & \textbf{9.8}  & 80.5 & 48.8 & 86.5 & 68.5 \\
\rowcolor{Gray}w/NPC      & \textbf{97.9} & \textbf{86.9} & \textbf{23.9} & \textbf{94.5} & \textbf{9.8}  & \textbf{90.7} & \textbf{67.4} & \textbf{89.4} & \textbf{78.5} \\\hline
SCE        & 97.7 & 85.7 & 30.5 & 89.4 & 81.3 & 80.6 & 66.6 & 83.3 & 65.8 \\
\rowcolor{Gray}w/NPC      & \textbf{98.2} & \textbf{95.1} & \textbf{38.5} & \textbf{96.2} & \textbf{92.5} & \textbf{94.5} & \textbf{88.8} & \textbf{85.3} & \textbf{70.7} \\\hline
Early Stop & 96.5 & 95.9 & 57.8 & 94.9 & 84.4 & 90.1 & 73.3 & 85.8 & 71.4 \\
\rowcolor{Gray}w/NPC      & \textbf{97.9} & \textbf{97.1} & \textbf{75.0} & \textbf{97.4} & \textbf{89.9} & \textbf{96.6} & \textbf{90.6} & \textbf{88.6} & \textbf{76.5} \\\hline
LS         & 97.8 & 85.8 & 32.6 & 88.9 & 82.4 & 80.6 & 66.2 & 84.3 & 65.3 \\
\rowcolor{Gray}w/NPC      & \textbf{98.2} & \textbf{95.1} & \textbf{40.6} & \textbf{96.0} & \textbf{92.8} & \textbf{94.4} & \textbf{88.7} & \textbf{86.2} & \textbf{70.8} \\\hline
REL        & \textbf{98.0} & 96.6 & 79.4 & 93.8 & 94.2 & 95.5 & 90.7 & 90.4 & 86.2 \\
\rowcolor{Gray}w/NPC      & 97.9 & \textbf{96.9} & \textbf{84.9} & \textbf{96.8} & \textbf{96.4} & \textbf{96.8} & \textbf{95.5} & \textbf{91.7} & \textbf{88.7} \\\hline
Forward    & 98.0 & 88.1 & 27.8 & 91.8 & 82.1 & 82.9 & 67.9 & 84.5 & 67.7 \\
\rowcolor{Gray}w/NPC      & \textbf{98.4} & \textbf{95.6} & \textbf{39.4} & \textbf{97.9} & \textbf{96.9} & \textbf{96.1} & \textbf{91.1} & \textbf{86.9} & \textbf{74.4} \\\hline
DualT      & 96.7 & 93.6 & \textbf{9.8}  & 95.9 & 89.7 & 93.0 & 94.3 & 86.6 & 72.8 \\
\rowcolor{Gray}w/NPC      & \textbf{97.8} & \textbf{96.3} & \textbf{9.8}  & \textbf{97.7} & \textbf{93.1} & \textbf{95.9} & \textbf{96.5} & \textbf{89.3} & \textbf{77.3} \\\hline
TVR        & 97.7 & 84.7 & 31.8 & 86.7 & 80.0 & 75.2 & 64.4 & 83.7 & 66.3 \\
\rowcolor{Gray}w/NPC      & \textbf{98.1} & \textbf{94.4} & \textbf{38.9} & \textbf{95.4} & \textbf{91.0} & \textbf{91.6} & \textbf{84.5} & \textbf{85.7} & \textbf{71.3} \\\hline
CausalNL   & 98.1 & 94.1 & 57.1 & \textbf{98.3} & 97.4 & 92.7 & 85.2 & 84.5 & 68.9 \\
\rowcolor{Gray}w/NPC      & \textbf{98.4} & \textbf{96.9} & \textbf{63.5} & \textbf{98.3} & \textbf{98.1} & \textbf{97.0} & \textbf{94.5} & \textbf{89.0} & \textbf{75.9} \\
\bottomrule
\end{tabular}%
}
\caption{MNIST test accuracy on all noise conditions trained by several baseline classifiers and after post-processed with NPC. \textbf{Bolded} with accuracy gain. Similar to the main paper, all experiments are replicated over five times with different random seeds. Reported results are mean value.}
\label{appen table: MNIST result}
\end{table}

\subsection{Comparison with Post-processors: Results on Real Dataset}
\label{appen:post_processing_real_dataset}
We compare the test accuracy of NPC and other post-processing method applicable after finishing training a classifier on Food-101 and Clothing-1M. We show NPC works best of all at Table \ref{tab:food and clothign post}.

\begin{table}[h]
\centering
\resizebox{0.9\textwidth}{!}{%
\begin{tabular}{c|cccc|cccc}
\toprule Dataset    & \multicolumn{4}{c|}{\textbf{Food101}}                    & \multicolumn{4}{c}{\textbf{Clothing1M}}                \\ \cmidrule(lr){0-1}\cmidrule(lr){2-5}\cmidrule(lr){6-9}
Method     & Classifier & KNN   & RoG   & NPC            & Classifier & KNN   & RoG   & NPC            \\ \midrule
CE         & 78.37      & 67.25$\pm$\scriptsize{0.2} & 78.38$\pm$\scriptsize{0.1} & \textbf{80.21}$\pm$\scriptsize{0.2} & 68.14      & 69.38$\pm$\scriptsize{0.1} & 68.05$\pm$\scriptsize{0.1} & \textbf{70.83}$\pm$\scriptsize{0.1} \\
Early Stop & 73.22      & 68.37$\pm$\scriptsize{0.2} & 75.87$\pm$\scriptsize{0.2} & \textbf{76.80}$\pm$\scriptsize{0.3} & 67.07      & 68.76$\pm$\scriptsize{0.0} & 68.25$\pm$\scriptsize{0.1} & \textbf{70.21}$\pm$\scriptsize{0.1} \\
SCE        & 75.23      & 64.72$\pm$\scriptsize{0.3} & 77.49$\pm$\scriptsize{0.1} & \textbf{78.26}$\pm$\scriptsize{0.3} & 67.77      & 69.34$\pm$\scriptsize{0.1} & 67.69$\pm$\scriptsize{0.2} & \textbf{70.36}$\pm$\scriptsize{0.1} \\
REL        & 78.96      & 78.90$\pm$\scriptsize{0.1} & \textbf{83.79}$\pm$\scriptsize{0.1} & 78.95$\pm$\scriptsize{0.4}          & 62.53      & 63.49$\pm$\scriptsize{0.0} & 66.12$\pm$\scriptsize{0.3} & \textbf{64.83}$\pm$\scriptsize{0.1} \\
Forward    & 83.76      & 83.68$\pm$\scriptsize{0.0} & 82.02$\pm$\scriptsize{0.1} & 83.77$\pm$\scriptsize{0.3}          & 66.86      & 69.40$\pm$\scriptsize{0.1} & 66.71$\pm$\scriptsize{0.2} & \textbf{70.02}$\pm$\scriptsize{0.1} \\
DualT      & 57.46      & 52.87$\pm$\scriptsize{0.2} & 60.82$\pm$\scriptsize{0.1} & \textbf{61.82}$\pm$\scriptsize{0.7} & 70.18      & 69.05$\pm$\scriptsize{0.1} & 69.46$\pm$\scriptsize{0.1} & 69.99$\pm$\scriptsize{0.4}          \\
TVR        & 77.34      & 65.98$\pm$\scriptsize{0.3} & 77.20$\pm$\scriptsize{0.1} & \textbf{79.37}$\pm$\scriptsize{0.1} & 67.18      & 68.27$\pm$\scriptsize{0.2} & 67.55$\pm$\scriptsize{0.1} & \textbf{69.44}$\pm$\scriptsize{0.1} \\
CausalNL   & 86.08      & 85.14$\pm$\scriptsize{0.1} & 85.93$\pm$\scriptsize{0.1} & \textbf{86.29}$\pm$\scriptsize{0.0} & 68.31      & 68.08$\pm$\scriptsize{0.1} & 68.37$\pm$\scriptsize{0.1} & \textbf{69.90}$\pm$\scriptsize{0.2} \\ \bottomrule
\end{tabular}%
}
\caption{Test accuracy after training the classifier with cross entropy loss, training KNN algorithm on representations, applying RoG, and applying NPC for \textit{Food101} (Food) and \textit{Clothing-1M} (Clothing).}
\label{tab:food and clothign post}
\end{table}

\newpage
\subsection{Flexibility of Prior Modeling for NPC}

As mentioned in Section \ref{section:implementation} of the main paper, we designed a prior distribution of the latent variable $y$ depending on $x$ as Eq. \ref{eq: prior} using predictions from KNN algorithm. However, the shape of a prior distribution does not have to be one-specific dimension sharp and all other dimensions flat. In this section, we report experiment results with more flexible modeling of a prior distribution as Eq. \ref{eq:topk prior}. Here, unlike the main page, we define $\Bar{Y}$ as the subset of the label set with $|\Bar{Y}|$ possible to be larger than 1 and $p$ as the prediction probability from KNN.

\begin{equation}
\begin{aligned}
\alpha_x^k& = \left\{\begin{matrix}
\delta & k\neq\Bar{y}\\ 
\delta+\rho \times p(\Bar{y}) & k=\Bar{y}
\end{matrix}\right. \;\;\text{for}\; \Bar{y}\in\Bar{Y},\; k=1,...,c
\end{aligned}
\label{eq:topk prior}
\end{equation}

Increasing flexibility of the prior may result in either accuracy increase or decrease; it can implicitly model information on the relations of classes by specifying the probability of a sample being assigned to each class, or inject noise and lose class information. We experiment the difference of a prior distribution for CIFAR-10 dataset with several noise conditions. Table \ref{tab:prior_top2} reports the test accuracy of NPC with its original prior and with the flexible version. We find out that TOP2 prior tends to work better than the original one with more severe noises or ASN or IDN noise condition. Since the prior distribution represents the information given from the input ($x$) for classification, we see the possibility of model development by modeling improved prior.

\begin{table}[h]
\centering
\resizebox{0.52\textwidth}{!}{%
\begin{tabular}{c ccccccccc}
\toprule
\multicolumn{1}{c}{\textbf{Model}} & \textbf{Clean} & \multicolumn{2}{c}{\textbf{SN}} & \multicolumn{2}{c}{\textbf{ASN}} & \multicolumn{2}{c}{\textbf{IDN}} & \multicolumn{2}{c}{\textbf{SRIDN}} \\
\cmidrule(lr){2-2}\cmidrule(lr){3-4}\cmidrule(lr){5-6}\cmidrule(lr){7-8}\cmidrule(lr){9-10}
& - & 20 & 80 & 20 & 40 & 20 & 40 & 20 & 40 \\ \midrule
CE & \textbf{89.0} & \textbf{80.8} & 17.0 & 84.7 & 78.8 & \textbf{80.9} & 59.9 & \textbf{74.3} & \textbf{64.3}\\
\rowcolor{Gray} w/ TOP2& 80.0   & 79.2   & \textbf{18.0}   & \textbf{85.3}   & \textbf{80.4}   & 80.3   & \textbf{64.6}   & 74.1   & 63.4   \\ \midrule
Joint & \textbf{84.4} & \textbf{80.2} & 8.3 & \textbf{83.0} & \textbf{77.7} & 80.7 & 69.1 & \textbf{72.0} & \textbf{63.6} \\ 
\rowcolor{Gray} w/ TOP2& 83.9   & 80.0   & \textbf{8.4}    & 82.4   & 77.3   & \textbf{81.1}   & \textbf{77.0}   & 71.3   & 62.8   \\ \midrule
Coteaching & 89.2 & \textbf{85.3} & \textbf{32.1} & 87.1 & 76.8 & \textbf{84.8} & \textbf{78.5} & \textbf{76.1} & \textbf{67.2} \\
\rowcolor{Gray}w/ TOP2&\textbf{89.3}   & 83.9   & 32.0   & \textbf{87.3}   & \textbf{77.4}   & 83.8   & \textbf{78.5}   & 75.9   & 67.0 \\\midrule
JoCoR & \textbf{89.3} & \textbf{86.0} & 27.0 & \textbf{85.1} & 79.0 & \textbf{85.8} & 80.1 & 75.9 & 66.7 \\
\rowcolor{Gray}w/ TOP2 & \textbf{89.3}   & 85.3   & \textbf{27.4}   & 85.0   & \textbf{80.0}   & 85.5   & \textbf{80.5}   & \textbf{76.1}   & \textbf{66.9} \\ \midrule
SCE  & 87.4 & 75.0 & 15.2 & 81.5 & 75.2 & 75.4 & 55.6 & 72.9 & 62.5 \\
\rowcolor{Gray}w/ TOP2 & \textbf{89.0}   & \textbf{79.2}   & \textbf{18.0}   & \textbf{85.3}   & \textbf{80.4}   & \textbf{80.3}   & \textbf{64.6}   & \textbf{74.1}   & \textbf{63.4}  \\ \midrule
Early Stop & \textbf{84.0} & 82.5 & \textbf{18.2} & 81.2 & 72.0 & 79.4 & 65.1 & \textbf{72.1} & \textbf{63.0} \\
\rowcolor{Gray}w/ TOP2 & 78.4   & \textbf{82.7}   & 17.9   & \textbf{81.3}   & \textbf{72.5}   & \textbf{79.7}   & \textbf{65.7}   & 71.7   & 62.8 \\ \midrule
LS  & \textbf{89.0} & \textbf{80.8} & 15.5 & 84.7 & 78.8 & \textbf{80.9} & 59.9 & \textbf{74.3} & \textbf{64.3}\\
\rowcolor{Gray}w/ TOP2 & \textbf{89.0}   & 79.2   & \textbf{18.0}   & \textbf{85.3}   & \textbf{80.4}   & 80.3   & \textbf{64.6}   & 74.1   & 63.4 \\ \midrule
REL & \textbf{83.4} & 78.6 & \textbf{26.0} & \textbf{75.9} & \textbf{76.1} & \textbf{78.5} & \textbf{51.2} & \textbf{70.7} & 64.2 \\
\rowcolor{Gray}w/ TOP2 & 81.6   & 77.0   & 25.6   & 74.6   & 75.6   & 77.0   & 51.1   & 70.2   & \textbf{64.3}\\ \midrule
Forward & \textbf{88.7} & 81.5 & 17.2 & 83.8 & 74.5 & 80.3 & 63.3 & \textbf{74.8} & 65.0 \\
\rowcolor{Gray}w/ TOP2 & 86.9   & \textbf{82.8}   & \textbf{20.8}   & \textbf{85.2}   & \textbf{78.3}   & \textbf{83.0}   & \textbf{76.4}   & 73.9  & \textbf{65.3} \\ \midrule
DualT & \textbf{86.0} & 83.0 & \textbf{8.4} & \textbf{83.0} & 77.5 & 81.0 & \textbf{77.3} & \textbf{70.1} & \textbf{64.0} \\
\rowcolor{Gray}w/ TOP2 & \textbf{86.0}   & \textbf{83.1}   & 8.3  & 82.0   & \textbf{78.1}   & \textbf{81.1}   & \textbf{77.3}   & 70.0  & 63.8  \\ \midrule
TVR & 88.3 & \textbf{80.8} & 15.7 & 84.1 & 76.5 & \textbf{80.8} & 60.7 & \textbf{74.5} & \textbf{64.5}\\
\rowcolor{Gray}w/ TOP2 & \textbf{88.4}   & 78.5   & \textbf{17.0}   & \textbf{84.3}   & \textbf{79.3}   & 79.7   & \textbf{65.0}   & 74.3  & 63.6 \\ \midrule
CausalNL & 89.7 & 81.2 & 18.8 & 85.0 & 74.8 & 81.2 & 71.9 & 75.3 & 63.9 \\
\rowcolor{Gray}w/ TOP2 & \textbf{90.6} & \textbf{81.4} & \textbf{19.0} & \textbf{85.7} & \textbf{75.4} & \textbf{81.5} & \textbf{72.5} & \textbf{75.8} & \textbf{64.0} \\ \bottomrule
\end{tabular}%
}
\caption{Test accuracy for CIFAR-10 datasets with various noisy label types. We demonstrate average performances with NPC of one-dimension sharp prior and its smoothed version (w/TOP2). TOP2 represents the dimension with largest probability and the second-best one become sharp with its value. The experimental results are averaged value over five trials. \textbf{Bolded} text denotes better performance.}
\label{tab:prior_top2}
\end{table}

\newpage
\subsection{Repetitive Application of NPC}
\label{sec:iter npc}

\begin{figure}[h]
    \centering
    \includegraphics[width=0.4\columnwidth]{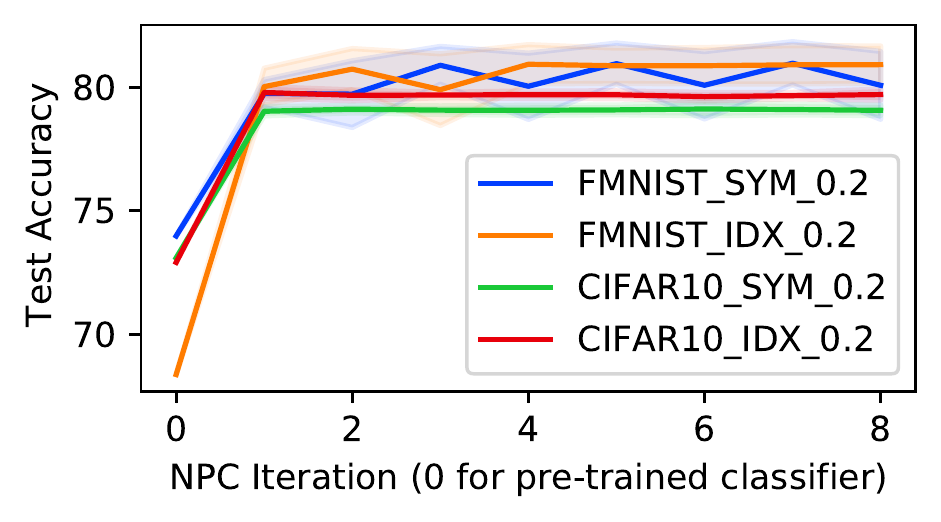}
    \caption{Iterative learning of NPC on CE for several datasets.}
    \label{fig:convergence}
\end{figure}

NPC calibrates the classifier prediction toward the true label as a post-processor. Having said that, It arises a new question : With the repetitive application of NPC, would the performance of given model gradually improve and finally torch the 100\% test accuracy, or the performance would converge to some extent? If it converges, when will it do? We implemented this experimental setting by designing the repetitive application of NPC, which iteratively utilizes calibrated prediction of NPC from previous iterations. Figure \ref{fig:convergence} shows that the NPC performance converges after the first deployment. It implies that NPC with a single iteration has already utilized enough information from the classifier to model latent true label.

\subsection{Extensive Comparison between CausalNL and NPC}

\begin{table}[h]
\centering
    \begin{tabular}{c|cc}
        \toprule Ratio & CausalNL & NPC\\ \midrule
        20$\%$ &81.81& \textbf{82.91} $\pm$ 0.1  \\
        40$\%$ &77.01& \textbf{78.83} $\pm$ 0.4 \\
        \bottomrule
    \end{tabular}
    \caption{Test accuracy on CIFAR-10 with IDN noise}
    \label{tab:resnet}
\end{table}

On the main page, we reported model accuracy of CausalNL lower than the one reported in the original paper \cite{causalNL}. We analyzed why this gap happened, found out that 1) the difference of backbone structure (9-layer CNN for ours, Resnet34 for the original paper) and 2) the difference of data normalization have affected the performance. we experiment CausalNL model under the same condition as the original authors did, and reproduce the performance as table \ref{tab:resnet}. Still, NPC shows better performance than CausalNL.

\subsection{Time Complexity Comparison on $T$ and $H$}
\label{append:time}
Our next question is: How long does it take on estimation of $T$ and $H$ to converge? We experimented it on MNIST Instance dependent 20 \% noise label dataset, and report the running time of each method on Table \ref{tab:time}.
\begin{table}[h]
    \centering
    \begin{tabular}{c|c}
        \toprule Model & Time \\ \midrule
        Forward & 636.4425 \\
        DualT & 3004.905 \\
        TVR & 468.7727 \\
        CausalNL & 4165 \\
        NPC & 28.2169 \\ \bottomrule
    \end{tabular}
    \caption{Model and Time spent to converge}
    \label{tab:time}
\end{table}


\end{document}